\documentclass[a4paper,fleqn]{cas-dc}
\usepackage[numbers,sort&compress]{natbib}

\def\tsc#1{\csdef{#1}{\textsc{\lowercase{#1}}\xspace}}
\tsc{WGM}
\tsc{QE}
\tsc{EP}
\tsc{PMS}
\tsc{BEC}
\tsc{DE}
\usepackage{amsmath,amssymb,amsthm,booktabs}
\usepackage{makecell}
\usepackage{algorithm,algpseudocode}
\usepackage{capt-of}
\usepackage{placeins}
\usepackage{needspace}
\renewcommand{\algorithmicrequire}{\textbf{Input:}}
\renewcommand{\algorithmicensure}{\textbf{Output:}}
\usepackage{graphicx}
\usepackage{xcolor}
\hypersetup{hypertexnames=false}
\usepackage{multirow}
\usepackage{pifont}
\newcommand{\cmark}{\ding{51}}
\newcommand{\xmark}{\ding{55}}

\newtheorem{proposition}{Proposition}
\newtheorem{lemma}{Lemma}
\newtheorem{corollary}{Corollary}
\newtheorem{assumption}{Assumption}
\newtheorem{definition}{Definition}

\usepackage{dsfont}
\newcommand{\ind}{\mathds{1}} 
\usepackage{xspace}
\newcommand{\methodname}{ProME\xspace}
\newcommand{\rev}[1]{#1} 
\definecolor{consistencyPurple}{RGB}{128,0,160}
\renewcommand{\paragraph}[1]{\par\smallskip\noindent\textbf{\upshape #1}\enspace\ignorespaces}
\hypersetup{colorlinks=true,linkcolor=black,citecolor=black,urlcolor=black}
\renewcommand{\figurename}{Fig.}

\begin{document}

\shorttitle{ProME: Group Robustness without Training Group Labels}
\shortauthors{Q. Wang et al.}

\title[mode=title]{ProME: Prototype-Margin Environments with Repair-Aware Selection for Group-Robust Learning}

\author[1,2]{Qianqian Wang}
\ead{12331103@mail.sustech.edu.cn}

\author[2]{Yunshan Li}
\ead{12331105@mail.sustech.edu.cn}

\author[2]{Dawei Huang}
\ead{12312405@mail.sustech.edu.cn}

\author[2]{Wenwu Gong}
\cormark[1]
\ead{gongww@sustech.edu.cn}

\author[1,2]{Lili Yang}
\cormark[1]
\ead{yangll@sustech.edu.cn}

\affiliation[1]{organization={Shenzhen Key Laboratory of Safety and Security for Next Generation of Industrial Internet},
            city={Shenzhen},
            country={China}}
\affiliation[2]{organization={Department of Statistics and Data Science, Southern University of Science and Technology},
            city={Shenzhen},
            country={China}}
\cortext[cor1]{Corresponding authors}
\newcommand{\std}[1]{{\scriptsize$\pm$#1}}

\begin{abstract}
Group-robust learning is crucial for maintaining accuracy on rare subpopulations when training-group labels are unavailable. However, existing methods often infer environments from a separate reference model and select representations before fitting the classifier used at deployment, leaving both decisions misaligned with the deployed predictor. In this work, we formulate group robustness without training-group labels as the \emph{endogenous environments with repair-aware selection} (ERAS) problem, and propose \textbf{ProME} (\textbf{Pro}totype-\textbf{M}argin \textbf{E}nvironments) to align both decisions with the deployed predictor. ProME splits prototype margins at their median to construct approximately balanced environments along the training trajectory, and fits a group-balanced linear head on group-annotated validation data to rank the resulting predictors by validation worst-group accuracy.  We theoretically bound the worst risk across the inferred environments for a fixed predictor and partition, showing that this bound transfers to the oracle groups under an explicit alignment condition. Extensive experiments show that prototype margins enrich shortcut-conflicting examples, classifier repair reshapes candidate evaluation, and ProME achieves the highest average worst-group accuracy among the compared methods with the same group-label access.
\end{abstract}

\begin{keywords}
group robustness \sep spurious correlations \sep invariant risk minimization \sep repair-aware model selection \sep prototype margin \sep out-of-distribution generalization
\end{keywords}

\maketitle


\section{Introduction}
\label{sec:intro}

Group-robust learning seeks to preserve accuracy across label--attribute groups whose proportions differ between training and deployment, as in medical diagnosis, content moderation, and attribute recognition~\citep{sagawa2019distributionally,yang2023change}. Rare groups contribute little to the average loss, allowing empirical risk minimization (ERM) to achieve high overall accuracy while accuracy on rare groups remains low. This disparity often arises when ERM relies on shortcut features whose correlations with labels hold for common groups but fail for rare groups~\citep{ye2026the}. For example, background and gender can serve as shortcut features for predicting bird type and hair color in Waterbirds and CelebA, respectively~\citep{sagawa2019distributionally,geirhos2020shortcut,liu2015deep}. Errors caused by these shortcuts have greater impact when rare groups become more prevalent at deployment. Worst-group accuracy (WGA) captures this vulnerability by reporting the accuracy of the least accurate group instead of averaging across groups~\citep{sagawa2019distributionally}. Fig.~\ref{fig:overview} illustrates this setting through the Waterbirds training-group proportions and reported WGA on three benchmarks. The challenge is to maintain WGA as group proportions shift.

Group distributionally robust optimization (Group DRO) targets WGA by minimizing the worst empirical group risk~\citep{sagawa2019distributionally,liu2021just,pezeshki2024discovering}. It requires a group label for every training example, yet such labels are costly or infeasible to collect for large training datasets. We use \emph{oracle groups} to denote the label--attribute groups whose labels are unavailable during training. Methods without training-group labels commonly use a two-stage pipeline. A reference model infers environments by clustering learned features~\citep{sohoni2020no,creager2021environment,han2024improving} or identifying misclassified examples~\citep{liu2021just,zhang2022correct,qiu2023simple,pezeshki2024discovering}. A separate predictor is trained using the resulting fixed assignments~\citep{sohoni2020no,creager2021environment,liu2021just}. This separation creates an environment--representation mismatch. This mismatch can weaken invariant learning~\citep{liao2026decorr,wang2026environment} and limit WGA on the oracle groups.

\noindent\begin{minipage}{\columnwidth}
  \centering
  \includegraphics[width=\columnwidth]{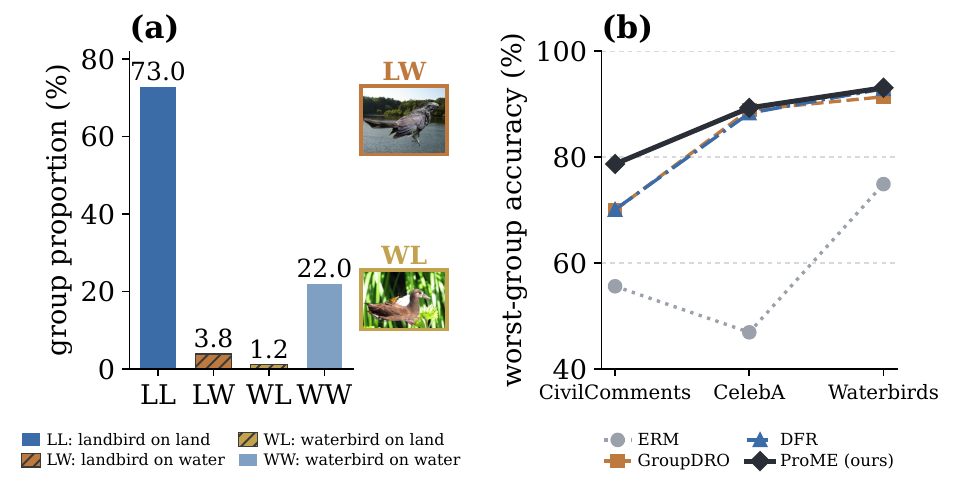}
  \refstepcounter{figure}\label{fig:overview}\par\smallskip
  {\small\textbf{\figurename~\thefigure.} \textbf{Training-group imbalance and worst-group accuracy.} \textbf{(a)}~Group proportions in the Waterbirds training set. \textbf{(b)}~Reported WGA of representative methods, including deep feature reweighting (DFR), on three benchmarks.}
\end{minipage}

A second misalignment arises during model selection, when an encoder checkpoint must be chosen for deployment. An encoder checkpoint with low pre-repair WGA can still retain features that support a stronger classifier after repair. Classifier repair refits the last layer on group-balanced data~\citep{kirichenko2023last,labonte2024towards,hill2026on}. Selecting encoder checkpoints by pre-repair WGA, or retaining the final checkpoint by default, can reject the checkpoint with the highest post-repair WGA. Together, the environment--representation mismatch and pre-repair model selection leave both decisions misaligned with the deployed predictor. This limitation raises a pivotal issue:
\begin{center}
\begin{minipage}{0.94\linewidth}
\emph{How can we align environment construction and model selection with the deployed predictor without training-group labels?}
\end{minipage}
\end{center}

In this work, we formulate group robustness without training-group labels as the \emph{endogenous environments with repair-aware selection} (ERAS) problem to align both decisions with the deployed predictor (Definition~\ref{def:endogenous}), and propose \textbf{ProME} (\textbf{Pro}totype-\textbf{M}argin
\textbf{E}nvironments) to implement this formulation. In ERAS, an environment partition is \emph{endogenous} when an encoder checkpoint creates the partition used to regularize later checkpoints on the same trajectory. Model selection is repair-aware when encoder checkpoints are ranked by WGA after classifier repair. To construct endogenous environments, ProME assigns every example a \emph{prototype margin} between its observed class and the strongest competing class under a cosine-prototype classifier. A median split of these margins produces two approximately balanced environments for invariant learning. In Stage~2, ProME freezes each retained encoder checkpoint, fits a group-balanced linear head on group-annotated validation data, and deploys the encoder--head pair with the highest validation WGA (Eq.~\eqref{eq:candidate-selection}). Environment construction and representation learning use only inputs and class labels. Group-annotated validation data support candidate tracking, classifier repair, regularization tuning, and selection; they do not update the Stage~1 representation. Overall, ProME constructs environments without training-group labels, ranks candidates after classifier repair, and deploys a single encoder--head pair.

Theoretically, we characterize three Stage~1 properties under explicit conditions. First, low prototype margins mark shortcut-conflicting examples when the causal and spurious components are sufficiently distinguishable (Proposition~\ref{prop:Tmargin}). Second, the induced assignments change by a controlled amount as the representation changes (Proposition~\ref{prop:T0}). Third, for a fixed predictor and partition, we bound the worst inferred-environment risk by the mean and variance of the environment risks (Proposition~\ref{prop:T2}). Under an explicit total-variation alignment condition, this bound transfers to the oracle groups (Corollary~\ref{cor:T2-oracle}). Classifier repair and repair-aware selection in Stage~2 are evaluated experimentally.

To evaluate ProME and examine its two alignment mechanisms, we study three real-world benchmarks (Waterbirds~\citep{sagawa2019distributionally}, CelebA~\citep{liu2015deep}, and CivilComments~\citep{koh2021wilds}) and one controlled benchmark (ColoredMNIST~\citep{arjovsky2019invariant}). The main comparison uses published baseline results under their original protocols. Controlled studies use matched settings. They show that trajectory-derived margins improve pre-repair WGA in the matched source comparison and that classifier repair reshapes candidate evaluation. Among the methods compared with the same group-label access, ProME raises average WGA from 83.9\% to 87.0\%.

Our contributions are summarized as follows:
\begin{itemize}
\item \textbf{Deployment-aligned formulation.}
We formulate group robustness without training-group labels as endogenous environments with repair-aware selection (ERAS). ERAS aligns environment construction and model selection with the deployed predictor.
\item \textbf{Two-stage ProME framework.}
We propose \textbf{ProME}, a two-stage framework that constructs approximately balanced environments from prototype margins and ranks encoder--head pairs after classifier repair. It requires no training-group labels, is backbone-agnostic, and deploys only the selected encoder--head pair.
\item \textbf{Mechanism-aligned theory.}
We establish theoretical results. Under explicit conditions, low prototype margins mark shortcut-conflicting examples, and the induced partition changes by a controlled amount with the representation. We also bound the worst inferred-environment risk for a fixed predictor and partition. This bound transfers to oracle groups under an explicit alignment condition.

\end{itemize}
\section{Related Work}\label{sec:related}
ProME builds on three lines of work: environment inference without training-group labels, prototype geometry for environment inference, and classifier repair with group-annotated validation data.

\paragraph{Environment inference without training-group labels.}
When group labels are available, Group DRO directly optimizes the worst empirical group risk, and group-balanced training provides a strong supervised baseline~\citep{sagawa2019distributionally,idrissi2022simple}. Without training-group labels, error-based methods identify hard or misclassified examples with a reference classifier and use them for reweighting or contrastive learning~\citep{nam2020learning,liu2021just,zhang2022correct,li2024bias}. Other methods exploit feature clusters, confidence, disagreement, or pretrained representations to identify spurious structure or estimate pseudo-groups~\citep{sohoni2020no,zheng2024selfguided,han2024disagreement,han2024improving,tsirigotis2024group,le2025out}. Counterfactual alignment provides a complementary diagnostic for spurious correlations in black-box classifiers~\citep{cohen2025identifying}. Environment-inference methods instead optimize a partition through invariant-learning or cross-risk signals~\citep{creager2021environment,pezeshki2024discovering,chen2024neuralcollapse,stromberg2024robustness}. In these pipelines, an auxiliary model, reference predictor, or clustering stage produces environments that are subsequently used by a robust objective.
This separation can create an environment--representation mismatch when the partition does not track the representation being optimized. ProME derives its inferred environments from the representation-learning trajectory itself.

\paragraph{prototype geometry for environment inference.}
Invariant learning seeks predictors that remain stable across environments~\citep{peters2016causal,rojas2018invariant,arjovsky2019invariant,ahuja2020invariant,liu2021kernelized}. Learning such predictors without a supplied environment partition requires additional information or an explicit inductive bias~\citep{lin2022zin}, and weak variation in the relevant shortcut can leave invariant objectives with a non-robust solution~\citep{rosenfeld2021risks,kamath2021does}. Prototypical classifiers summarize each class by a mean representation, while neural collapse relates class means to terminal classifier geometry~\citep{snell2017prototypical,papyan2020prevalence}. Diverse prototypical ensembles further use multiple class prototypes to capture latent subpopulations~\citep{to2025diverse}. ProME gives prototype geometry a different role: the observed-class prototype margin from the current representation-learning trajectory defines the environments used for invariant learning.

\paragraph{Classifier repair and repair-aware selection.}
ERM representations can retain core features even when the corresponding classifier has poor worst-group accuracy~\citep{izmailov2022feature,kirichenko2023last}. Deep feature reweighting (DFR) freezes the encoder and retrains its last layer on group-balanced held-out data~\citep{kirichenko2023last}. Subsequent work reduces the annotation requirements of last-layer retraining or revisits its implementation choices~\citep{labonte2024towards,nguyen2024group,stromberg2024for}. Automatic feature reweighting (AFR), group-aware priors, and related two-stage corrections also improve group robustness through a repaired or reweighted final layer~\citep{qiu2023simple,rudner2024mind,stromberg2024enhancing}. These studies establish the value of classifier repair, yet the last-layer methods above typically evaluate a fixed or preselected representation. Their setup leaves model selection tied to a fixed or preselected representation. ProME fits the same type of group-balanced linear head to every retained encoder and ranks the resulting encoder--head pairs by validation WGA. This repair-aware comparison aligns model selection with the deployed predictor. Stage~1 environment construction and representation learning use no training-group labels; group-annotated validation data support candidate tracking, classifier repair, regularization tuning, and selection without updating the Stage~1 representation.

\section{Preliminaries}\label{sec:prelim}
In this section, we introduce subpopulation shift and group-robust risk, review invariant learning when environments are observed, and define endogenous environment inference and repair-aware selection when they are not. For reference, The notations used in this paper are presented in Table~\ref{tab:notation}.

\begin{table*}[t]
\centering
\caption{\textbf{Notations.}}
\label{tab:notation}
\small
\setlength{\tabcolsep}{3pt}
\begin{tabular}{@{}p{0.08\textwidth}p{0.40\textwidth}||p{0.08\textwidth}p{0.40\textwidth}@{}}
\toprule
Notation & Explanation & Notation & Explanation \\
\midrule
$x\in\mathcal X$ & input
& $y\in\{0,1\}$ & binary target label \\
$a$ & spurious attribute used to define oracle groups
& $g\in\mathcal G$ & oracle group; $\mathcal G$ is the set of oracle groups \\
$\mathcal D_{\mathrm{tr}},\mathcal D_{\mathrm{val}}$ & training data without group labels and validation data with group labels
& $Q_g,\pi_g^{\mathrm{tr}}$ & distribution of oracle group $g$ and its training proportion \\
$b_\xi,p_\omega$ & backbone and projection head
& $\phi_{\xi,\omega}$ & normalized representation map defined by $b_\xi$ and $p_\omega$ \\
$\phi_{\mathrm{ref}}$ & fixed auxiliary encoder used by exogenous environment-inference methods
& $s(x;y)$ & observed-class prototype similarity minus the strongest competing similarity, Eq.~\eqref{eq:proto-margin-score}
\\
$\tau$ & temperature of the prototype classifier
& $f_\phi(x)$ & prototype logit, Eq.~\eqref{eq:proto-logit} \\
$\mu_c$ & normalized prototype of class $c$ 
& $m$ & median prototype margin on $\mathcal D_{\mathrm{tr}}$ \\
$\Pi(\phi)$ & split into low- and high-margin cells at $m$, Eq.~\eqref{eq:proto-margin-split}
& $\hat{\mathcal E}_0,\hat{\mathcal E}_1$ & inferred low- and high-margin environment cells \\
$\lambda_t$ & invariance-penalty weight at training step $t$
& $P_q,\mathcal R_q$ & distribution indexed by $q$ and its expected loss \\
$\mathcal R_{\mathrm{tr}},\mathcal R_{\mathrm{wg}}$ & average training risk and worst-group risk
& $\mathcal P_{\mathrm{IRM}},\mathcal P_{\mathrm{REx}}$ & IRM gradient penalty and risk extrapolation (REx) variance penalty \\
$\mathcal E_{\mathrm{tr}}$ & observed training environments used by invariant learning
& $\rho$ & upper bound on the total-variation distance between an oracle group and a matched inferred cell \\
$\mathcal M_{\mathrm{ms}}$ & set of milestone training steps
& $\mathcal C$ & set of candidate encoders retained from Stage~1 \\
$T_{\mathrm{warm}},T_{\mathrm{IRM}}$ & lengths of the warm-up and invariant-training stages
& $\mathsf{mode}$ & environment-update policy used in Algorithm~\ref{alg:prome} \\
$T_{\mathrm{proto}},T_{\mathrm{eval}}$ & intervals between prototype refreshes and validation evaluations
& $\mathrm{WGA}_{\mathrm{val}}$ & validation WGA used to rank repaired encoder--head pairs \\
$\theta_t,h^{(t)}$ & encoder retained at step $t$ and its repaired linear head
& $Q,\pi_{\mathrm{cf}}$ & shortcut status ($Q{=}+1$: aligned; $Q{=}-1$: conflicting) and the sample conflict fraction in Proposition~\ref{prop:Tmargin} \\
\bottomrule
\end{tabular}
\end{table*}
\subsection{Subpopulation Shift}\label{sec:prelim-setting}

Let $\mathcal{D}_{\mathrm{tr}}=\{(x_i,y_i)\}_{i=1}^{n}$ be the observed training set, where $x_i\in\mathcal{X}$ and $y_i\in\{0,1\}$. Each benchmark defines a finite collection $\mathcal G$ of oracle evaluation groups from the target and one or more spurious attributes. Stage~1 observes only $(x_i,y_i)$. Group membership is available in the validation set for model selection and in the test set for evaluation. Let $Q_g$ denote the conditional distribution of group $g$. When $\mathcal G$ partitions the training population, let $\pi_g^{\mathrm{tr}}$ denote its training proportion. The training distribution is then the mixture
\begin{equation}
\mathbb{P}_{\mathrm{tr}}
=
\sum_{g\in\mathcal{G}}
\pi_g^{\mathrm{tr}}Q_g.
\label{eq:train-mixture}
\end{equation}

Under subpopulation shift, the group proportions may change at deployment even when the group-conditional distributions remain fixed. A predictor that performs well on the training mixture can still fail on a rare group whose correlation structure differs from that of the majority~\citep{sagawa2019distributionally}. Average training accuracy hides these failures, and the weakest group decides whether the predictor we finally deploy is usable.

\paragraph{Group robustness.}
For any distribution index $q$, let $P_q$ denote the corresponding distribution over $(X,Y)$. Given a representation $\phi$, a linear classifier $w$, and a surrogate loss $\ell$, define
\begin{equation}
\mathcal{R}_q(\phi,w)
:=
\mathbb{E}_{(X,Y)\sim P_q}
\!\left[
\ell\!\left(
\langle w,\phi(X)\rangle,Y
\right)
\right].
\label{eq:distribution-risk}
\end{equation}

ERM minimizes the average training risk. Group-robust learning instead controls the largest group risk:
\begin{equation}
\begin{aligned}
\mathcal{R}_{\mathrm{tr}}(\phi,w)
&:=
\mathbb{E}_{\mathbb{P}_{\mathrm{tr}}}
\!\left[
\ell\!\left(
\langle w,\phi(X)\rangle,Y
\right)
\right],\\
\mathcal{R}_{\mathrm{wg}}(\phi,w)
&:=
\max_{g\in\mathcal{G}}
\mathcal{R}_g(\phi,w).
\end{aligned}
\label{eq:group-risks}
\end{equation}

The accuracy counterpart of $\mathcal{R}_{\mathrm{wg}}$ is worst-group accuracy
\begin{equation*}
\mathrm{WGA}(h)
:=
\min_{g\in\mathcal{G}}
\Pr_{(X,Y)\sim Q_g}
\!\left[h(X)=Y\right],
\end{equation*}
where $h$ is the hard classifier induced by the prediction score. Both $\mathcal{R}_{\mathrm{wg}}$ and WGA need oracle group identities, which Stage~1 does not have. They remain available for validation and evaluation. Without training-group labels, the learning problem shifts from optimizing known groups to building training environments that can stand in for them. The only signal available for that construction is the geometry of the representation being trained.
\subsection{Problem Formulation}\label{sec:problem}

Standard invariant-learning methods assume access to a partition of the training data into environments $\mathcal{E}_{\mathrm{tr}}$ across which predictive correlations vary. Given these assignments, invariant risk minimization (IRM)~\citep{arjovsky2019invariant} learns a representation $\phi$ and classifier $w$ using the scale-gradient objective
\begin{equation}
\begin{aligned}
\mathcal{L}_{\mathrm{IRM}}
=&
\sum_{e\in\mathcal{E}_{\mathrm{tr}}}
\mathcal{R}_e(\phi,w)\\
&+
\lambda
\sum_{e\in\mathcal{E}_{\mathrm{tr}}}
\left\|
\nabla_{\alpha}
\mathcal{R}_e(\phi,\alpha w)
\big|_{\alpha=1}
\right\|_2^2,
\end{aligned}
\label{eq:irm}
\end{equation}
where $\alpha$ is the scalar dummy classifier and $\lambda\ge 0$ controls the IRMv1 penalty. While well defined when environment assignments are observed, Eq.~\eqref{eq:irm} does not specify how to construct them. To supply the missing assignments, a natural strategy is to read them off a detached reference encoder $\phi_{\mathrm{ref}}$ trained beforehand. However, a partition fixed by $\phi_{\mathrm{ref}}$ reflects the geometry of that encoder, not that of the representation trained downstream. This \emph{environment--representation mismatch} then persists for the rest of training. We call such a construction \emph{exogenous}, and reserve \emph{endogenous} for a partition drawn from a checkpoint on the trajectory that it later regularizes. To remove the mismatch, we make environment construction part of the representation-learning problem rather than a preprocessing step.

Let $\Pi$ map a representation to a two-cell partition of $\mathcal{D}_{\mathrm{tr}}$, and let $\mathcal{P}_{\mathrm{inv}}(\phi,\mathcal{E})$ denote an invariance penalty evaluated on environments $\mathcal{E}$. For compactness, $\mathcal{R}_{\mathrm{tr}}(\phi)$ denotes the training risk of the prediction rule associated with $\phi$. Let $\mathcal C=\{\theta_t\}$ denote the retained encoder candidates, let $f_t$ be the Stage~1 classifier attached to $\theta_t$, and let $\widehat h_t$ be a group-balanced head fitted after freezing $\theta_t$. We formalize the resulting problem as follows.

\begin{definition}[Endogenous environments with repair-aware selection (ERAS)]\label{def:endogenous}
Given a checkpoint $\tilde{\phi}$ on the encoder trajectory, a pair $(\phi^\star,t^\star)$ solves the ERAS problem when the induced partition, optimized representation, and retained candidates arise from the same trajectory and satisfy
\begin{equation}
\begin{aligned}
\hat{\mathcal{E}}
&=
\Pi(\tilde{\phi}),\\
\phi^\star
&\in
\arg\min_{\phi}
\left\{
\mathcal{R}_{\mathrm{tr}}(\phi)
+
\lambda
\mathcal{P}_{\mathrm{inv}}
\bigl(\phi,\hat{\mathcal{E}}\bigr)
\right\}.
\end{aligned}
\label{eq:problem}
\end{equation}
and ranks the retained candidates after classifier repair,
\begin{equation}
t^\star
\in
\arg\max_{t:\theta_t\in\mathcal C}
\mathrm{WGA}_{\mathrm{val}}\bigl(\widehat h_t\circ\theta_t\bigr).
\label{eq:problem-sel}
\end{equation}

The checkpoint $\tilde{\phi}$ lies on the same representation-learning trajectory subsequently optimized under its induced partition. Equation~\eqref{eq:problem-sel} ranks candidate encoders after fitting a matched deployment head to each one.
\end{definition}

Definition~\ref{def:endogenous} aligns candidate generation with the learning trajectory and candidate selection with the repaired predictor used at deployment; neither condition requires training-group labels.

The second condition is decisive because standard two-stage pipelines rank candidates before repair, using the Stage~1 classifier $f_t$:
\begin{equation}
t_{\mathrm{pre}}
\in
\arg\max_{t:\theta_t\in\mathcal C}
\mathrm{WGA}_{\mathrm{val}}\bigl(f_t\circ\theta_t\bigr).
\label{eq:selection-mismatch}
\end{equation}
This criterion mixes representation quality with the bias of $f_t$, and it can discard an encoder that would have repaired well. This motivates comparing candidates under the repaired classifier used at deployment.

The ideal endogenous objective would update $\Pi(\phi)$ with $\phi$. Equation~\eqref{eq:problem} uses a trajectory checkpoint $\tilde\phi$ to make this coupling tractable during training.

\section{Method}\label{sec:method}\label{sec:proposed-method}
In this section, we present ProME, a two-stage framework for group robustness without training-group labels. ProME instantiates the ERAS problem of Definition~\ref{def:endogenous}. It derives environments from the representation being trained, and it ranks candidates after classifier repair.

\begin{figure*}
  \centering
  \includegraphics[width=\textwidth]{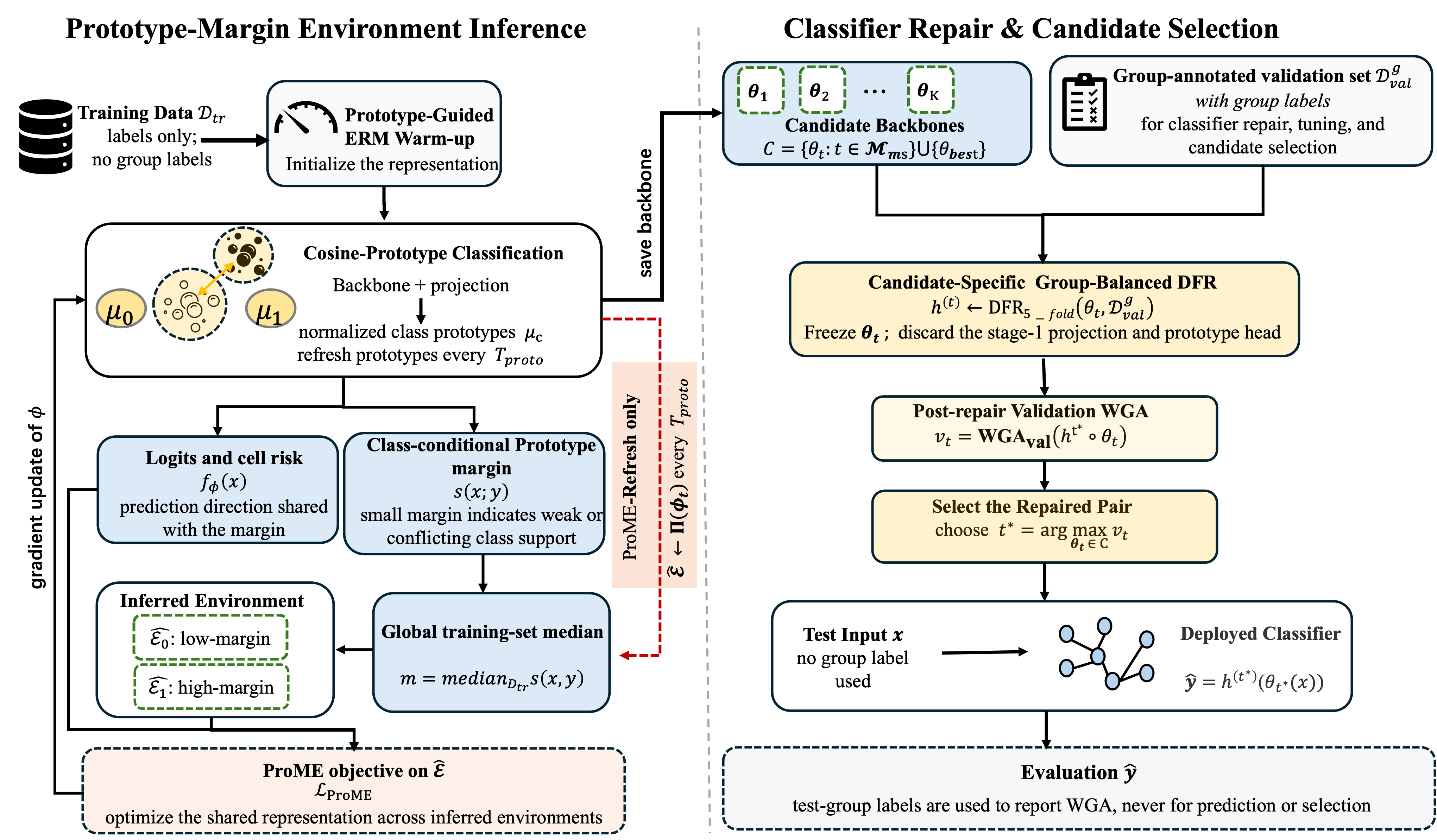}
  \caption{\textbf{Overview of ProME.} Stage~1 learns representation candidates from prototype-margin environments without training-group labels. Stage~2 repairs and selects candidates using group-annotated validation data, with five-fold cross-validation for regularization tuning. The dashed path denotes ProME-Refresh.}
  \label{fig:framework}
\end{figure*}

Figure~\ref{fig:framework} separates candidate generation from deployment-aligned selection. Stage~1 performs a prototype-guided ERM warm-up, partitions the training data by the median prototype margin, and learns across the induced low- and high-margin environments. Checkpoints retained along this trajectory form the candidate set. Stage~2 removes each Stage~1 head, fits a matched group-balanced linear head to every frozen encoder, and ranks the repaired encoder--head pairs by validation WGA. The search is performed offline; test-time inference uses only the selected pair.

\subsection{Cosine-Prototype Classification}\label{sec:proto-irm}
ProME requires an environment signal that follows the same representation geometry used for prediction. We adopt a cosine-prototype classifier to provide this shared geometric basis.
Let $b_{\xi}$ denote the backbone and $p_{\omega}$ denote the projection layer. The normalized representation is defined as

\begin{equation}
\phi_{\xi,\omega}(x)
=
\mathrm{normalize}
\!\left(
p_{\omega}(b_{\xi}(x))
\right),
\label{eq:normalized-representation}
\end{equation}
where $\mathrm{normalize}(u)=u/\|u\|_2$. For each class $c$, ProME computes the class prototype from the current representation:
\begin{align}
\mu_c
&\leftarrow
\mathrm{normalize}
\!\left(
\frac{1}{|\mathcal{D}_c|}
\sum_{(x_i,y_i)\in\mathcal{D}_c}
\phi(x_i)
\right),
\nonumber\\
\mathcal{D}_c
&=
\{(x,y)\in\mathcal{D}_{\mathrm{tr}}:y=c\}.
\label{eq:proto-update}
\end{align}
Given the two prototypes $\mu_0$ and $\mu_1$, the prediction score is defined as
\begin{equation}
f_{\phi}(x;\mu_0,\mu_1)
=
\frac{1}{\tau}
\left[
\cos(\phi(x),\mu_1)
-
\cos(\phi(x),\mu_0)
\right],
\label{eq:proto-logit}
\end{equation}
where $\tau>0$ is the temperature. The sign of $f_{\phi}(x)$ determines the binary prediction, and its magnitude measures the difference in cosine support for the two classes. This makes the prediction rule directly interpretable through the feature--prototype geometry.

\paragraph{Prototype updates.} ProME treats prototypes as non-trainable state variables that are periodically refreshed during training. Every \(T_{\mathrm{proto}}\) steps, we update the prototypes using the closed-form class-mean rule in Eq.~\eqref{eq:proto-update} and stop gradients through the prototype computation. During back-propagation, the refreshed prototypes remain fixed, and gradients update only \(b_{\xi}\) and \(p_{\omega}\) through \(f_{\phi}\). This construction follows the class-mean geometry of prototypical classifiers~\citep{snell2017prototypical}. The alignment between normalized class means and terminal classifier geometry is further supported by neural-collapse results~\citep{papyan2020prevalence,chen2024neuralcollapse}.

The updated prototypes provide a geometric measure of label support for each example. This measure defines the partitioning signal used in the next module.
\subsection{Prototype-Margin Environment Inference}\label{sec:proto-margin}
Without training-group labels, ProME constructs environments from the geometric signals available in the learned representation. Under the shortcut-first hypothesis, examples violating the dominant spurious correlation receive weaker support for their observed labels after ERM warm-up. We quantify this support through the prototype margin:
\begin{equation}
s(x;y,\phi,\mu_0,\mu_1)
=
\cos(\phi(x),\mu_y)
-
\cos(\phi(x),\mu_{1-y}).
\label{eq:proto-margin-score}
\end{equation}

The margin measures the relative prototype support of the observed class. A larger value indicates stronger alignment with the observed label, while a smaller value indicates ambiguous or conflicting class support. The margin is computed from the same classifier used for prediction, which provides a unified geometric basis for environment inference.

\paragraph{Environment construction.}
After prototype-guided ERM warm-up, we compute the margins over $\mathcal{D}_{\mathrm{tr}}$ and construct two environments through a global median split:
\begin{align}
\hat{\mathcal{E}}_0
&=
\{(x,y):s(x;y)\le m\},
\nonumber\\
\hat{\mathcal{E}}_1
&=
\{(x,y):s(x;y)>m\},
\nonumber\\
m
&=
\mathrm{median}_{\mathcal{D}_{\mathrm{tr}}}\,s.
\label{eq:proto-margin-split}
\end{align}

The median split produces two balanced cells up to ties without requiring group proportions. The low-margin environment contains examples with weaker prototype support for their observed labels, while the high-margin environment contains examples with stronger class agreement. Section~\ref{sec:diagnostic} measures shortcut-conflict enrichment and downstream WGA. Corollary~\ref{cor:T2-oracle} separately uses the total-variation quantity $\rho$ to state its transfer condition.

\paragraph{Invariant objective.}
Let $K=|\hat{\mathcal{E}}|$. Over the inferred environments, we optimize the average environment risk together with the IRMv1 and REx penalties:

\begin{align}
\bar{\mathcal{R}}(f_{\phi})
&=
K^{-1}
\sum_{\hat e\in\hat{\mathcal{E}}}
\mathcal{R}_{\hat e}(f_{\phi}),
\nonumber\\
\mathcal{P}_{\mathrm{IRM}}(f_{\phi})
&=
K^{-1}
\sum_{\hat e\in\hat{\mathcal{E}}}
\left\|
\nabla_{w'}
\mathcal{R}_{\hat e}(w'f_{\phi})
\big|_{w'=1}
\right\|_2^2,
\nonumber\\
\mathcal{P}_{\mathrm{REx}}(f_{\phi})
&=
K^{-1}
\sum_{\hat e\in\hat{\mathcal{E}}}
\left(
\mathcal{R}_{\hat e}(f_{\phi})
-
\bar{\mathcal{R}}(f_{\phi})
\right)^2.
\label{eq:prome-penalties}
\end{align}
The Stage~1 objective is
\begin{equation}
\mathcal{L}_{\mathrm{ProME}}(\phi)
=
\bar{\mathcal{R}}(f_{\phi})
+
\lambda_t
\left(
\mathcal{P}_{\mathrm{IRM}}(f_{\phi})
+
\mathcal{P}_{\mathrm{REx}}(f_{\phi})
\right),
\label{eq:prome-irm}
\end{equation}
where $w'$ denotes the scalar dummy classifier in IRMv1, and $\lambda_t$ controls the invariance regularization strength. The IRM penalty enforces consistent classifier optimality across environments, while the REx penalty reduces risk variation across inferred environments~\citep{krueger2021out}. Section~\ref{sec:theory} bounds the gap between average and worst inferred-environment risk through the REx penalty (Proposition~\ref{prop:T2}). IRMv1 adds a complementary constraint on environment-wise classifier optimality.
\paragraph{Partition updates.}
Both ProME variants refresh the prototypes every $T_{\mathrm{proto}}$ steps. The standard protocol constructs $\hat{\mathcal{E}}$ from the post-warm-up encoder and keeps the partition fixed during invariant training. ProME-Refresh updates $\hat{\mathcal{E}}$ by recomputing Eq.~\eqref{eq:proto-margin-split} after each prototype refresh. Both variants optimize the same objective and differ only in the update frequency of the representation-induced partition.

\paragraph{Prototype refinement.}
For the label-augmented sensitivity analysis in Section~\ref{sec:robustness_analysis}, we replace Eq.~\eqref{eq:proto-update} with group-balanced prototypes:
\begin{align}
\mu_c^{\mathrm{bal}}
&\leftarrow
\mathrm{normalize}
\!\left(
\frac{1}{2}\bar{\phi}_{c,0}
+
\frac{1}{2}\bar{\phi}_{c,1}
\right),
\nonumber\\
\bar{\phi}_{c,a}
&=
\frac{1}{|\mathcal{D}_{c,a}|}
\sum_{(x,y)\in\mathcal{D}_{c,a}}
\phi(x),
\label{eq:balanced-proto}
\end{align}
where $\mathcal{D}_{c,a}=\{(x,y)\in\mathcal{D}_{\mathrm{tr}}:y=c,\ A(x)=a\}$. This construction balances the contribution of spurious attributes within each class for sensitivity analysis. ProME uses class labels only to compute prototypes in the standard setting, following Eq.~\eqref{eq:proto-update}.
\subsection{Classifier Repair and Candidate Selection}\label{sec:candidates}
Stage~2 makes classifier repair the basis of representation selection. Given the retained backbones and group-annotated validation data, ProME fits the same group-balanced linear-head family to every candidate and selects the repaired encoder--head pair with the highest validation WGA. Validation groups serve for candidate tracking, head fitting, hyperparameter tuning, and final model selection, without updating the Stage~1 representation.

\paragraph{Candidate construction.}
Pre-repair performance is treated as one source of candidates, not as the deployment decision. ProME retains frozen backbones at fixed milestones $\mathcal M_{\mathrm{ms}}$ and includes $\theta_{\mathrm{best}}$, the checkpoint with the highest pre-repair validation WGA under the prototype classifier. All candidates are subsequently compared only after matched repair:
\begin{equation}
\mathcal C
=
\{\theta_t:t\in\mathcal M_{\mathrm{ms}}\}
\cup
\{\theta_{\mathrm{best}}\}.
\label{eq:candidate-set}
\end{equation}

Each $\theta_t$ is a frozen backbone checkpoint at step $t$. Before classifier repair, ProME removes the Stage~1 projection and cosine-prototype head. This construction preserves representation diversity and evaluates all candidates under the same downstream classifier family.

\paragraph{Candidate-specific classifier repair.}
For each $\theta_t\in\mathcal C$, we freeze the encoder and fit a linear DFR head $h^{(t)}$ on the group-annotated validation set~\citep{kirichenko2023last}. Let $\mathcal D_{\mathrm{val}}^g$ denote the validation examples in oracle group $g\in\mathcal G$. For a regularization strength $\gamma$, the group-balanced objective is
\begin{equation}
\mathcal L_{\mathrm{DFR}}^{(t)}(h;\gamma)
=
\frac{1}{|\mathcal G|}
\sum_{g\in\mathcal G}
\frac{1}{|\mathcal D_{\mathrm{val}}^g|}
\sum_{(x,y)\in\mathcal D_{\mathrm{val}}^g}
\ell\!\left(h(\theta_t(x)),y\right)
+
\gamma\|h\|_2^2.
\label{eq:dfr-objective}
\end{equation}

The objective assigns equal weight to each validation group regardless of its empirical frequency. For each candidate, we select $\gamma$ by five-fold cross-validation and refit the head with the selected value. Freezing the encoder and using the same classifier family make the repaired candidates directly comparable.

\paragraph{Validation selection.}
After fitting all candidate-specific heads, ProME evaluates each encoder--head pair using validation WGA:

\begin{equation}
v_t
=
\mathrm{WGA}_{\mathrm{val}}
\!\left(h^{(t)}\circ\theta_t\right),
\qquad
t^\star
=
\underset{\theta_t\in\mathcal C}{\arg\max}\;v_t.
\label{eq:candidate-selection}
\end{equation}

The selected predictor is $(\theta_{t^\star},h^{(t^\star)})$. The group-annotated validation set is used for Stage~2 candidate tracking, head fitting, regularization tuning, and encoder--head ranking, whereas Stage~1 requires no training-group labels. Section~\ref{sec:post_repair} examines how candidate coverage and matched repair affect post-repair evaluation.

\paragraph{Test-time prediction.}
Given a test input $x$, ProME applies the selected encoder and linear head:
\begin{equation}
\hat y
=
h^{(t^\star)}
\!\left(\theta_{t^\star}(x)\right).
\label{eq:test-prediction}
\end{equation}

We tune the regularization strength by five-fold cross-validation on the group-annotated validation set, refit each candidate-specific head on the full set, and rank the repaired encoder--head pairs by validation WGA. Further implementation details are provided in Appendix~\ref{app:tables}.

Notably, ProME offers several practical advantages:
\begin{itemize}
    \item \textbf{Training-group-free candidate generation.} Prototype-margin environments use class labels only and require no auxiliary environment model.
    \item \textbf{Repair-aware representation selection.} Every candidate is evaluated under the matched classifier family used for deployment.
    \item \textbf{Single-model deployment.} Candidate search is offline; inference uses one selected encoder and one linear head.
\end{itemize}

We summarize the complete ProME paradigm in  Algorithm~\ref{alg:prome}. 

\begin{algorithm}[!t]
\caption{ProME training and candidate selection.}
\label{alg:prome}
\small
\begin{algorithmic}[1]
\Require $\mathcal D_{\mathrm{tr}}$, group-annotated $\mathcal D_{\mathrm{val}}$, $\tau$, $\{\lambda_t\}$, $T_{\mathrm{warm}}$, $T_{\mathrm{IRM}}$, $T_{\mathrm{proto}}$, $T_{\mathrm{eval}}$, $\mathcal M_{\mathrm{ms}}$, and environment-update mode $\mathsf{mode}$.
\Ensure Selected encoder--head pair $(\theta_{t^\star},h^{(t^\star)})$.

\Statex \textbf{Stage~1}: Prototype-Margin Environment Learning
\State Initialize backbone $b_\xi$ and projection $p_\omega$.
\State Optimize $(\xi,\omega)$ for $T_{\mathrm{warm}}$ epochs with $f_\phi$.
\State $\{\mu_c\}\leftarrow\mathrm{Refresh}(\phi_{\xi,\omega},\mathcal D_{\mathrm{tr}})$; 
$\hat{\mathcal E}\leftarrow\Pi(\phi_{\xi,\omega})$.
\State $(u_{\mathrm{best}},\theta_{\mathrm{best}})
\leftarrow
(\mathrm{WGA}_{\mathrm{val}}(f_\phi),\mathrm{copy}(b_\xi))$.

\For{$t=1,\ldots,T_{\mathrm{IRM}}$}
    \If{$t>1$ and $t\equiv1\pmod{T_{\mathrm{proto}}}$}
        \State $\{\mu_c\}\leftarrow\mathrm{Refresh}(\phi_{\xi,\omega},\mathcal D_{\mathrm{tr}})$.
        \State $\hat{\mathcal E}\leftarrow\Pi(\phi_{\xi,\omega})$ 
        \textbf{if} $\mathsf{mode}=\mathrm{Refresh}$.
    \EndIf
    \State Update $(\xi,\omega)$ with Eq.~\eqref{eq:prome-irm} over $\hat{\mathcal E}$.
    \If{$t\equiv0\pmod{T_{\mathrm{eval}}}$}
        \State $u_t\leftarrow\mathrm{WGA}_{\mathrm{val}}(f_\phi)$.
        \State $(u_{\mathrm{best}},\theta_{\mathrm{best}})
        \leftarrow
        (u_t,\mathrm{copy}(b_\xi))$
        \textbf{if} $u_t>u_{\mathrm{best}}$.
    \EndIf
    \If{$t\in\mathcal M_{\mathrm{ms}}$}
        \State $\theta_t\leftarrow\mathrm{copy}(b_\xi)$.
    \EndIf
\EndFor

\Statex \textbf{Stage~2: Classifier Repair and Candidate Selection}
\State $\mathcal C\leftarrow
\{\theta_t:t\in\mathcal M_{\mathrm{ms}}\}
\cup
\{\theta_{\mathrm{best}}\}$.
\For{$\theta_t\in\mathcal C$}
    \State $h^{(t)}
    \leftarrow
    \mathrm{DFR}_{\mathrm{5-fold}}
    (\theta_t,\mathcal D_{\mathrm{val}})$.
    \State $v_t
    \leftarrow
    \mathrm{WGA}_{\mathrm{val}}
    (h^{(t)}\circ\theta_t)$.
\EndFor
\State $t^\star\leftarrow\arg\max_{\theta_t\in\mathcal C}v_t$.
\State \Return $(\theta_{t^\star},h^{(t^\star)})$.
\end{algorithmic}
\end{algorithm}

\section{Theoretical Analysis}\label{sec:theory}
This section analyzes the Stage~1 mechanism required by the first condition of Definition~\ref{def:endogenous}. Lemma~\ref{prop:T1} links the prototype margin to the classifier used for training. Proposition~\ref{prop:Tmargin} states when low margins mark shortcut-conflicting examples. Proposition~\ref{prop:T0} bounds how far the induced partition moves when the representation changes. Proposition~\ref{prop:T2} connects the REx term to the worst inferred-environment risk. The deployment-aligned selection condition is evaluated experimentally in Section~\ref{sec:experiments}.

IRM and REx assume that the environments are already given. Identifying invariant structure from the pooled distribution of $(X,Y)$ needs an additional inductive bias. ProME supplies that bias through prototype-margin geometry. Section~\ref{sec:related} places the alternatives in context.

\subsection{When prototype margins expose shortcut conflicts}\label{sec:theory-margin}

For binary labels, write $\widetilde Y=2Y-1\in\{-1,+1\}$. Let $\widetilde A\in\{-1,+1\}$ encode a binary spurious attribute and define $Q=\widetilde Y\widetilde A$. Here $Q=+1$ denotes shortcut-aligned examples and $Q=-1$ denotes shortcut-conflicting examples. This notation is used only for analysis. ProME never observes $A$ during Stage~1. The margin and the training logit must first be shown to share one direction.

\begin{lemma}[Prototype-margin identity]\label{prop:T1}
Under Assumption~\ref{ass:proto}, let $w^\star=\mu_1-\mu_0$. For every $(x,y)$,
\begin{equation}
s(x;y)=\widetilde y\,\langle\phi(x),w^\star\rangle,
\qquad \widetilde y=2y-1.
\label{eq:T1-equivalence}
\end{equation}
Moreover, $f_\phi(x)=\tau^{-1}\langle\phi(x),w^\star\rangle$. The environment score and the prediction logit share one linear direction.
\end{lemma}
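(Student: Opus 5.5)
The identity is a direct computation, given what Assumption~\ref{ass:proto} has to supply. That assumption is not displayed before the lemma, but the statement only makes sense if it guarantees two things: the representation is unit-normalized, $\|\phi(x)\|_2=1$, as in Eq.~\eqref{eq:normalized-representation}; and the prototypes are unit vectors, $\|\mu_0\|_2=\|\mu_1\|_2=1$, as produced by the normalization in Eq.~\eqref{eq:proto-update}. I take these two facts as the content I invoke. Under them each cosine reduces to an inner product:
\[
\cos(\phi(x),\mu_c)=\frac{\langle\phi(x),\mu_c\rangle}{\|\phi(x)\|_2\,\|\mu_c\|_2}=\langle\phi(x),\mu_c\rangle .
\]
This reduction is the only place the assumption enters. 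The single degenerate case, a zero class mean before normalization, must be excluded by the assumption, and I would say so explicitly.

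Next I substitute this into the score $f_\phi$ of Eq.~\eqref{eq:proto-logit}. By linearity of the inner product,
\[
f_\phi(x)=\tau^{-1}\bigl(\langle\phi(x),\mu_1\rangle-\langle\phi(x),\mu_0\rangle\bigr)=\tau^{-1}\langle\phi(x),w^\star\rangle,
\]
which is the second claim. For the margin in Eq.~\eqref{eq:proto-margin-score} I split into the two label values.
\begin{itemize}
\item If $y=1$, then $\widetilde y=+1$ and $s=\langle\phi(x),\mu_1-\mu_0\rangle=\widetilde y\,\langle\phi(x),w^\star\rangle$.
\item If $y=0$, then $\widetilde y=-1$ and $s=\langle\phi(x),\mu_0-\mu_1\rangle=-\langle\phi(x),w^\star\rangle=\widetilde y\,\langle\phi(x),w^\star\rangle$.
\end{itemize}
Together these give Eq.~\eqref{eq:T1-equivalence}.

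It follows that $s(x;y)=\tau\,\widetilde y\,f_\phi(x)$, so the environment score and the training logit share the direction $w^\star$. I do not expect a real obstacle. The only point needing care is to state exactly which normalization facts Assumption~\ref{ass:proto} provides, so that the cosine-to-inner-product step is justified for every $(x,y)$.
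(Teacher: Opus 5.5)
Your proposal is correct and follows the paper's proof essentially step for step: unit norms of $\phi(x)$ and of the normalized prototypes turn each cosine into an inner product, linearity gives $f_\phi(x)=\tau^{-1}\langle\phi(x),w^\star\rangle$, and the case split $y=1$ versus $y=0$ gives the signed-margin identity. Your reading of Assumption~\ref{ass:proto} also matches the paper, which assumes sphere-normalized $\phi$ and nonzero empirical class means normalized to unit prototypes (together with $\tau>0$).
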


The proof is given in Appendix~\ref{app:proof-T1}. The identity alone does not imply shortcut-conflict separation. Proposition~\ref{prop:Tmargin} gives sufficient conditions under which shortcut-conflicting examples receive lower margins.

\begin{proposition}[Shortcut-conflict separation]\label{prop:Tmargin}
Suppose the signed representation after ERM warm-up satisfies
\begin{equation}
\widetilde Y\phi(X)=u+Qv+\xi,
\qquad u^\top v=0.
\label{eq:shortcut-model}
\end{equation}
Here, $Q=+1$ denotes shortcut alignment and $Q=-1$ denotes shortcut conflict. Let the prototype contrast be $w^\star=\alpha u+\beta v$, where $\alpha,\beta>0$. Assume $\mathbb P(Q=q)>0$ for each $q\in\{-1,+1\}$. If $\mathbb E[\langle\xi,w^\star\rangle\mid Q]=0$, then
\begin{equation}
\mathbb E[s(X;Y)\mid Q=+1]
-
\mathbb E[s(X;Y)\mid Q=-1]
=2\beta\|v\|_2^2.
\label{eq:margin-gap}
\end{equation}
If, in addition, $|\langle\xi,w^\star\rangle|\le\delta$ almost surely for some $0\le\delta<\beta\|v\|_2^2$, every shortcut-conflicting example has a smaller margin than every shortcut-aligned example almost surely. For a sample of size $n$ with distinct margins, let $n_{\mathrm{cf}}$ be its number of conflicting examples and $\pi_{\mathrm{cf}}=n_{\mathrm{cf}}/n$. If $\pi_{\mathrm{cf}}\le 1/2$, the low-margin cell contains all $n_{\mathrm{cf}}$ conflicting examples and has conflict fraction
$n_{\mathrm{cf}}/\lceil n/2\rceil$. This fraction equals $2\pi_{\mathrm{cf}}$ when $n$ is even.
\end{proposition}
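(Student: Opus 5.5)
The plan is to reduce everything to a scalar decomposition of the margin via Lemma~\ref{prop:T1}. By that identity, $s(X;Y)=\widetilde Y\langle\phi(X),w^\star\rangle=\langle \widetilde Y\phi(X),w^\star\rangle$. Substituting the shortcut model \eqref{eq:shortcut-model} and $w^\star=\alpha u+\beta v$, and using $u^\top v=0$, gives
\begin{equation*}
s(X;Y)=\alpha\|u\|_2^2+Q\,\beta\|v\|_2^2+\langle\xi,w^\star\rangle .
\end{equation*}
The cross terms $\alpha Q\,u^\top v$ and $\beta\,v^\top u$ vanish by orthogonality. Every later claim then follows from this single identity.

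\textbf{Expected gap.} Conditioning on $Q=q$ is legitimate because $\mathbb P(Q=q)>0$. Taking conditional expectations and using $\mathbb E[\langle\xi,w^\star\rangle\mid Q]=0$ yields
\begin{equation*}
\mathbb E[s\mid Q=q]=\alpha\|u\|_2^2+q\beta\|v\|_2^2 .
\end{equation*}
Subtracting the cases $q=+1$ and $q=-1$ gives \eqref{eq:margin-gap}.

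\textbf{Almost-sure separation.} Suppose $|\langle\xi,w^\star\rangle|\le\delta$ almost surely. Then every conflicting example satisfies $s\le \alpha\|u\|^2-\beta\|v\|^2+\delta$, and every aligned example satisfies $s\ge \alpha\|u\|^2+\beta\|v\|^2-\delta$. The aligned lower bound exceeds the conflicting upper bound exactly when $2\delta<2\beta\|v\|^2$, which holds because $\delta<\beta\|v\|^2$. So every conflicting margin is strictly below every aligned margin almost surely.

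\textbf{Finite-sample median split.} Work on the almost-sure event, so that within the sample all conflicting examples occupy the lowest $n_{\mathrm{cf}}$ ranks, and assume the margins are distinct. The median split \eqref{eq:proto-margin-split} puts into $\hat{\mathcal E}_0$ exactly the examples with $s\le m$. I take this to be the lowest $\lceil n/2\rceil$ ranks, with $m$ read as the lower median, i.e.\ the $\lceil n/2\rceil$-th order statistic. Since $\pi_{\mathrm{cf}}\le 1/2$, we have $n_{\mathrm{cf}}\le n/2\le\lceil n/2\rceil$, so all conflicting examples lie in the low cell. The low cell's conflict fraction is therefore $n_{\mathrm{cf}}/\lceil n/2\rceil$, which equals $2n_{\mathrm{cf}}/n=2\pi_{\mathrm{cf}}$ when $n$ is even.

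The main obstacle is this last step, because the median convention is not fixed in the statement. For even $n$, a midpoint median lies strictly between the $n/2$-th and $(n/2{+}1)$-th order statistics and still gives a low cell of size $n/2$. For odd $n$, the low cell has size $(n+1)/2=\lceil n/2\rceil$. I would state the convention explicitly and check that both cases yield $|\hat{\mathcal E}_0|=\lceil n/2\rceil$.
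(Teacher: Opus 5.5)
Your proposal is correct and follows essentially the same route as the paper's proof. Both use Lemma~\ref{prop:T1} to obtain the scalar decomposition $s=\alpha\|u\|_2^2+Q\beta\|v\|_2^2+\langle\xi,w^\star\rangle$, take conditional expectations for the gap, use the $2(\beta\|v\|_2^2-\delta)>0$ separation, and put a low cell of size $\lceil n/2\rceil$ under the median split. Your explicit check of the median convention is a useful addition: with distinct margins, the lower or midpoint median gives $|\hat{\mathcal E}_0|=\lceil n/2\rceil$, whereas an upper median for even $n$ would not. This pins down what the paper simply calls the ``conventional empirical median.''
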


The proof is given in Appendix~\ref{app:proof-Tmargin}. The margin is informative when ERM learns a shortcut direction and the projected noise preserves its separation. Proposition~\ref{prop:Tmargin} states as a condition what error- and bias-based methods observe empirically. Section~\ref{sec:diagnostic} tests it on real data.

\subsection{Stability of the endogenous partition}\label{sec:theory-stability}

ProME computes both prototypes and the median on the training sample. A useful stability result must account for this data dependence. Let $\hat\Pi_n(\phi)$ denote the empirical split in Eq.~\eqref{eq:proto-margin-split}. Consistent with that equation, define $\hat e_i(\phi)=0$ when $s_i(\phi)\le\hat m_n(\phi)$ and $\hat e_i(\phi)=1$ otherwise.

\begin{proposition}[Same-sample partition stability]\label{prop:T0}
Fix a labeled sample $\mathcal D_{\mathrm{tr}}=\{(x_i,y_i)\}_{i=1}^n$ with $|\mathcal D_c|>0$ for both classes. Let $\phi$ and $\phi'$ be unit-normalized representations whose class prototypes are computed from this same sample by Eq.~\eqref{eq:proto-update}. Define
\begin{align*}
\varepsilon&=\max_i\|\phi(x_i)-\phi'(x_i)\|_2,\\
\beta_n&=\min_{c,\psi\in\{\phi,\phi'\}}
\left\|\frac{1}{|\mathcal D_c|}\sum_{i:y_i=c}\psi(x_i)\right\|_2,
\qquad
L_s=2+\frac{4}{\beta_n}.
\end{align*}
Assume $\beta_n>0$. Let $\hat m_n(\phi)$ be the empirical median and define the empirical concentration function
\begin{equation*}
\hat\omega_\phi(t)=\frac{1}{n}\sum_{i=1}^n
\ind\bigl[|s_i(\phi)-\hat m_n(\phi)|\le t\bigr].
\end{equation*}
Then the prototype, margin, and median perturbations satisfy
\begin{align}
\max_c\|\mu_c(\phi)-\mu_c(\phi')\|_2
&\le \frac{2\varepsilon}{\beta_n},\nonumber\\
\max_i|s_i(\phi)-s_i(\phi')|
&\le L_s\varepsilon,\nonumber\\
|\hat m_n(\phi)-\hat m_n(\phi')|
&\le L_s\varepsilon.
\label{eq:T0-stability}
\end{align}
Consequently, the fraction of reassigned examples obeys
\begin{equation}
\frac{1}{n}\sum_{i=1}^n
\ind[\hat e_i(\phi)\neq\hat e_i(\phi')]
\le
\hat\omega_\phi(2L_s\varepsilon).
\label{eq:T0-reassignment}
\end{equation}
\end{proposition}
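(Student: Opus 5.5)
The argument runs in four layers: prototypes, then margins, then median, then reassignment. Each layer inherits a Lipschitz bound from the one before it.

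\emph{Prototypes.} Write $\bar\psi_c=|\mathcal D_c|^{-1}\sum_{i:y_i=c}\psi(x_i)$ for $\psi\in\{\phi,\phi'\}$. Averaging the pointwise bound gives $\|\bar\phi_c-\bar\phi'_c\|_2\le\varepsilon$. For nonzero vectors $a,b$ we use the elementary inequality
\[
\Bigl\|\tfrac{a}{\|a\|}-\tfrac{b}{\|b\|}\Bigr\|\le \frac{2\|a-b\|}{\max(\|a\|,\|b\|)} .
\]
It follows by adding and subtracting $b/\|a\|$ and applying $\bigl|\|a\|-\|b\|\bigr|\le\|a-b\|$. Since both norms are at least $\beta_n$, this gives $\|\mu_c(\phi)-\mu_c(\phi')\|_2\le 2\varepsilon/\beta_n$.

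\emph{Margins.} All vectors involved are unit-norm, so $\cos(\phi(x_i),\mu_c)=\langle\phi(x_i),\mu_c\rangle$. For each class $c$,
\[
\bigl|\langle\phi(x_i),\mu_c\rangle-\langle\phi'(x_i),\mu_c'\rangle\bigr|
\le\|\phi(x_i)-\phi'(x_i)\|\,\|\mu_c\|+\|\phi'(x_i)\|\,\|\mu_c-\mu'_c\|
\le\varepsilon+\frac{2\varepsilon}{\beta_n}.
\]
The margin $s_i$ is the difference of two such terms, for $c=y_i$ and $c=1-y_i$. Hence $|s_i(\phi)-s_i(\phi')|\le 2\varepsilon+4\varepsilon/\beta_n=L_s\varepsilon$.

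\emph{Median.} The empirical median, like any fixed order statistic, is $1$-Lipschitz in the sup norm. If every $s_i$ moves by at most $\eta$, then each order statistic $s_{(k)}$ moves by at most $\eta$. To see this, note that $\{i:s_i(\phi')\le s_{(k)}(\phi)+\eta\}\supseteq\{i:s_i(\phi)\le s_{(k)}(\phi)\}$ has at least $k$ elements, which gives one direction; the other direction is symmetric. The same holds for the convention that averages the two middle order statistics. Therefore $|\hat m_n(\phi)-\hat m_n(\phi')|\le L_s\varepsilon$.

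\emph{Reassignment.} This is the step requiring care, because the threshold itself moves. Suppose $\hat e_i(\phi)\ne\hat e_i(\phi')$. Then $s_i(\phi)-\hat m_n(\phi)$ and $s_i(\phi')-\hat m_n(\phi')$ lie on opposite sides of zero, where one side is non-strict. By the two previous bounds, the quantity $d_i:=s_i-\hat m_n$ changes by at most $2L_s\varepsilon$. A sign flip from $d_i\le0$ to $d_i'>0$, or the reverse, therefore forces $|d_i(\phi)|\le 2L_s\varepsilon$. For example, if $d_i(\phi)>0$ and $d_i(\phi')\le0$, then $d_i(\phi)\le d_i(\phi')+2L_s\varepsilon\le 2L_s\varepsilon$.

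Each reassigned index is thus counted in $\hat\omega_\phi(2L_s\varepsilon)$. Averaging over $i$ yields Eq.~\eqref{eq:T0-reassignment}.

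The main obstacle is the normalization step, specifically getting the constant $2/\beta_n$ in the prototype bound. The remaining steps are routine once that bound is in hand.
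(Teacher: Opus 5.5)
Your proof is correct and follows the paper's argument step for step: you average the pointwise bound over each class, apply a normalization inequality to get $2\varepsilon/\beta_n$, add and subtract $\langle\phi'(x_i),\mu_c(\phi)\rangle$ to reach $L_s\varepsilon$, use that order statistics are $1$-Lipschitz, and then confine reassigned examples to a window of width $2L_s\varepsilon$ around $\hat m_n(\phi)$. The only difference is cosmetic. Your normalization inequality has $\max(\|a\|,\|b\|)$ in the denominator where the paper uses $\min$; yours is sharper but needs a one-line symmetry step (apply your add-and-subtract bound with the larger-norm vector in the role of $a$), and either version gives the same $2\varepsilon/\beta_n$ once both norms are at least $\beta_n$.
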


The proof is given in Appendix~\ref{app:proof-T0}. Proposition~\ref{prop:T0} is deterministic and covers the implementation that ProME actually runs, in which prototypes and the median come from the same training set. A concentration argument with independently fixed prototypes would not. Equation~\eqref{eq:T0-reassignment} also identifies the failure modes. Instability increases when class means approach zero, which enlarges $L_s$, or when many margins accumulate near the median, which enlarges $\hat\omega_\phi$.

\subsection{Risk control on inferred environments}\label{sec:theory-risk}

Margin separation and partition stability explain how the environments are formed, but neither result alone guarantees robustness. We next characterize the quantity controlled directly by the REx penalty in Eq.~\eqref{eq:prome-penalties}. For a fixed classifier $f$, write $r_e=\mathcal R_{\hat e}(f)$, $\bar{\mathcal R}=K^{-1}\sum_e r_e$, and $\mathcal R_{\mathrm{wg}}^{(\hat{\mathcal E})}=\max_e r_e$.

\begin{proposition}[Worst inferred-environment risk bound]\label{prop:T2}
For any $K\ge2$ inferred environments,
\begin{equation}
\mathcal R_{\mathrm{wg}}^{(\hat{\mathcal E})}(f)
\le
\bar{\mathcal R}(f)
+
\sqrt{(K-1)\mathcal P_{\mathrm{REx}}(f)}.
\label{eq:T2-bound}
\end{equation}
For the two-cell partition used by ProME, equality holds.
\end{proposition}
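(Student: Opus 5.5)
The bound is a deterministic statement about $K$ real numbers, so I would work with the deviations $d_e=r_e-\bar{\mathcal R}$, which satisfy $\sum_e d_e=0$ and $\mathcal P_{\mathrm{REx}}=K^{-1}\sum_e d_e^2$. Let $e^\star$ attain the maximum risk and write $D=d_{e^\star}\ge 0$; it is nonnegative because the maximum of the $r_e$ is at least their mean. The claim is equivalent to $D^2\le (K-1)K^{-1}\sum_e d_e^2$.

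The main step is a Cauchy--Schwarz bound on the remaining deviations. Since the deviations sum to zero, $\sum_{e\ne e^\star} d_e=-D$. Cauchy--Schwarz over these $K-1$ terms gives
\begin{equation*}
D^2=\Bigl(\sum_{e\ne e^\star} d_e\Bigr)^2\le (K-1)\sum_{e\ne e^\star} d_e^2 .
\end{equation*}
Hence
\begin{equation*}
\sum_e d_e^2 = D^2+\sum_{e\ne e^\star} d_e^2 \ge D^2\Bigl(1+\frac{1}{K-1}\Bigr)=\frac{K}{K-1}D^2 .
\end{equation*}
Rearranging gives $D^2\le (K-1)\mathcal P_{\mathrm{REx}}$. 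Taking square roots, which is valid since $D\ge0$, yields Eq.~\eqref{eq:T2-bound}. The only point needing care is this choice to isolate the maximizer. Simply bounding $D^2\le\sum_e d_e^2=K\mathcal P_{\mathrm{REx}}$ would give the weaker factor $K$ instead of $K-1$.

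For equality when $K=2$, I would compute directly. Here $d_0=-d_1$, so $\mathcal P_{\mathrm{REx}}=d_1^2=(r_1-r_0)^2/4$. Then $\sqrt{(K-1)\mathcal P_{\mathrm{REx}}}=|r_1-r_0|/2$, and
\begin{equation*}
\bar{\mathcal R}+\frac{|r_1-r_0|}{2}=\max(r_0,r_1).
\end{equation*}
This is the identity $\max(a,b)=\tfrac{a+b}{2}+\tfrac{|a-b|}{2}$. It is also consistent with the Cauchy--Schwarz step, which is tight when $K=2$ because only one term remains.
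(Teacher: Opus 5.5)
Your proof is correct and follows the paper's argument: center the risks, isolate the maximizing deviation, and apply Cauchy--Schwarz to the remaining $K-1$ deviations to get $\mathcal P_{\mathrm{REx}}\ge M^2/(K-1)$. Your $K=2$ equality check via $d_0=-d_1$ and $\max(a,b)=\tfrac{a+b}{2}+\tfrac{|a-b|}{2}$ is a more explicit version of the paper's one-line equality case.
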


The proof is given in Appendix~\ref{app:proof-T2}. Equation~\eqref{eq:T2-bound} is a deterministic inequality for a fixed predictor, partition, and vector of cell risks. Both terms on its right are aligned with the Stage~1 objective, but the inequality does not by itself bound the empirical-to-population gap or guarantee that optimization reaches a small right-hand side. It requires no calibration assumption linking the IRMv1 gradient penalty to worst-cell risk. IRMv1 remains a complementary optimization signal, while REx supplies the direct risk-dispersion control.

We next relate inferred-cell risk to oracle-group risk under an alignment condition.
\begin{corollary}[Conditional transfer to oracle groups]\label{cor:T2-oracle}
Assume the loss takes values in $[0,B]$. Condition on the learned predictor and partition, and define $\mathrm{TV}(P,Q)=\sup_A|P(A)-Q(A)|$ for distributions over $(X,Y)$. If every oracle group distribution $Q_g$ is within total variation $\rho$ of some inferred-cell distribution $Q_{e(g)}$, then
\begin{equation}
\mathcal R_{\mathrm{wg}}^{(\mathcal G)}(f)
\le
\bar{\mathcal R}(f)
+
\sqrt{(K-1)\mathcal P_{\mathrm{REx}}(f)}
+B\rho.
\label{eq:T2-oracle}
\end{equation}
\end{corollary}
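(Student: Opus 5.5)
The argument reduces oracle-group risk to inferred-cell risk and then applies Proposition~\ref{prop:T2}. The only new ingredient is a total-variation bound on the difference of expectations of a bounded function.

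\textbf{Step 1.} Fix an oracle group $g$ and its matched cell $e(g)$. Write $h(x,y)=\ell(f(x),y)$, which is measurable with values in $[0,B]$ because we condition on the learned predictor and partition. We aim to show
\[
\mathcal R_g(f)-\mathcal R_{e(g)}(f)=\mathbb E_{Q_g}[h]-\mathbb E_{Q_{e(g)}}[h]\le B\,\mathrm{TV}(Q_g,Q_{e(g)})\le B\rho.
\]
For this we use the layer-cake representation $\mathbb E_P[h]=\int_0^B P(h>t)\,dt$. Each integrand difference satisfies $|Q_g(h>t)-Q_{e(g)}(h>t)|\le\mathrm{TV}$ directly from the sup-over-events definition, and integrating over $[0,B]$ gives the factor $B$. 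An equivalent route rescales $h/B\in[0,1]$ and uses the standard fact that $\sup_{0\le h'\le1}|\mathbb E_P h'-\mathbb E_Q h'|=\mathrm{TV}(P,Q)$. The layer-cake route is cleaner because it matches the paper's one-sided, sup-over-sets definition of TV exactly, without any factor-of-two convention.

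\textbf{Step 2.} Bound the matched cell's risk by the worst inferred cell: $\mathcal R_{e(g)}(f)\le\max_e r_e=\mathcal R_{\mathrm{wg}}^{(\hat{\mathcal E})}(f)$. Then apply Proposition~\ref{prop:T2} to obtain
\[
\mathcal R_{e(g)}(f)\le\bar{\mathcal R}(f)+\sqrt{(K-1)\mathcal P_{\mathrm{REx}}(f)}.
\]
Here the cell risks $r_e$ are interpreted as expectations under the cell distributions $Q_e$, consistent with the corollary's conditioning.

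\textbf{Step 3.} Combining Steps 1 and 2 gives, for every $g$, $\mathcal R_g(f)\le\bar{\mathcal R}(f)+\sqrt{(K-1)\mathcal P_{\mathrm{REx}}(f)}+B\rho$. The right-hand side does not depend on $g$, so taking the maximum over $g\in\mathcal G$ yields Eq.~\eqref{eq:T2-oracle}.

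\textbf{Main obstacle.} The substantive step is Step 1, where the TV-to-expectation inequality must be stated with the constant matching the paper's definition of $\mathrm{TV}$. The rest is bookkeeping: the conditioning ensures $f$ and the cells are fixed, so no sampling issues arise.
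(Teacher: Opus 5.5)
Your proposal is correct and matches the paper's proof. You bound each oracle-group risk by its matched cell's risk plus $B\rho$, pass to the worst inferred cell, apply Proposition~\ref{prop:T2}, and maximize over $g$. The only difference is that you justify $|\mathbb E_{Q_g}\ell-\mathbb E_{Q_{e(g)}}\ell|\le B\,\mathrm{TV}(Q_g,Q_{e(g)})$ explicitly through the layer-cake formula, whereas the paper simply invokes the variational characterization; your check that the constant is $B$ rather than $2B$ under the sup-over-events convention is right.
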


The same appendix section proves the corollary. Corollary~\ref{cor:T2-oracle} controls oracle-group risk only under the stated alignment condition. For the $0$--$1$ loss, where $B=1$, it equivalently yields
$\mathrm{WGA}(f)\ge 1-\bar{\mathcal R}(f)-\sqrt{(K-1)\mathcal P_{\mathrm{REx}}(f)}-\rho$. The experiments measure shortcut-conflict enrichment; they do not estimate $\rho$.

Together, the results characterize three Stage~1 properties under explicit conditions: shortcut-conflict separation, same-sample partition stability, and fixed-risk control across inferred environments. Classifier repair and repair-aware selection remain outside the analysis.

\section{Experiments}\label{sec:experiments}
In this section, we present the experimental results to evaluate two claims: \textbf{(i)} ProME improves worst-group accuracy without training-group labels; and \textbf{(ii)} prototype margins enrich shortcut-conflicting examples, while classifier repair reshapes candidate evaluation.
\subsection{Setup}
\paragraph{Datasets.}
We evaluate ProME on four subpopulation-shift benchmarks, each pairing a label with a spurious attribute: Waterbirds~\citep{sagawa2019distributionally}, bird type against background; CelebA~\citep{liu2015deep}, blond hair against gender; CivilComments~\citep{borkan2019nuanced}, toxicity across identity-based subpopulations; and ColoredMNIST~\citep{arjovsky2019invariant}, digit parity against a controllable color shortcut. The first three datasets form the main evaluation, while ColoredMNIST provides a controlled test under shortcut reversal.

\paragraph{Baselines.}
We group the baselines by their access to group labels. When training-group labels are available, Group DRO~\citep{sagawa2019distributionally} minimizes the worst-group loss, while RWG~\citep{idrissi2022simple} balances group sampling. Without validation-group access, ERM~\citep{sagawa2020investigation} minimizes average loss, while XRM+GroupDRO~\citep{pezeshki2024discovering} infers groups without oracle group labels. EIIL~\citep{creager2021environment}, JTT~\citep{liu2021just}, CnC~\citep{zhang2022correct}, and AFR~\citep{qiu2023simple} use validation groups only for model selection. When validation groups can also be used for fitting, SSA~\citep{nam2022spread} infers training-group labels, MAPLE~\citep{zhou2022model} learns weights for training examples, and DFR~\citep{kirichenko2023last} fits a balanced classifier on a frozen representation. GSR-HF and GSR~\citep{qiao2025group} use influence scores to reweight examples for last-layer retraining. This grouping makes the supervision budget of each comparison explicit.

\paragraph{Implementation and Evaluation.}
We use ImageNet-pretrained ResNet-50 encoders for Waterbirds and CelebA, BERT-base-uncased for CivilComments, and the three-layer convolutional network of IRM~\citep{arjovsky2019invariant} for ColoredMNIST. The candidate pool contains fixed milestone checkpoints and the validation-best pre-repair encoder. We freeze each candidate and fit a group-balanced logistic-regression head, selecting its $\ell_2$ penalty by five-fold cross-validation. We report WGA over the oracle test groups. Waterbirds, CelebA, and ColoredMNIST contain four $(y,a)$ groups, while CivilComments uses the standard $16$ identity--label groups. The headline results report the mean and population standard deviation over seeds $\{42,0,1\}$, and we also evaluate a fixed pool of ten seeds. Table~\ref{tab:main} uses published baseline results from their original protocols, while the source, penalty, and candidate-set comparisons use matched settings; label-augmented diagnostics are identified explicitly. Appendix~\ref{app:tables} provides the remaining training details. Because baseline entries follow their published protocols, Table~\ref{tab:main} provides reported rather than controlled comparisons.

\subsection{Main Results}\label{sec:main_results}

\subsubsection{Overall Group Robustness}

\begin{table*}
\centering
\caption{\textbf{Performance comparison on three real-world benchmarks.} For ProME, we report the \textbf{mean and standard deviation ($\pm$ std)} of WGA over seeds $\{42,0,1\}$; baseline entries are taken from their original protocols. \textit{Train} indicates whether oracle training-group labels update the representation, and \textit{Val} indicates oracle validation-group use: S for selection or hyperparameter tuning, F+S for classifier fitting and selection, and -- for none. Rows are grouped by increasing group-label access. Average is the unweighted mean across datasets. \textbf{bolded} and \underline{underlined} values mark the highest and second-highest reported means.}
\label{tab:main}
\small
\begin{tabular*}{\textwidth}{@{\extracolsep{\fill}}lcccccc@{}}
\toprule
\multirow{2}{*}{Method} & \multicolumn{2}{c}{Group-label} & \multicolumn{4}{c}{Worst-group accuracy (\%)} \\
\cmidrule(lr){2-3}\cmidrule(lr){4-7}
 & Train & Val & Waterbirds & CelebA & CivilComments & Average \\
\midrule
\multicolumn{7}{@{}l}{\textit{Training-group supervision (reference)}} \\
\hline
Group DRO~\citep{sagawa2019distributionally} & Yes & S & $91.4{\pm}1.1$ & $88.9{\pm}2.3$ & $70.0{\pm}2.0$ & $83.4$ \\
RWG~\citep{idrissi2022simple} & Yes & S & $87.6{\pm}1.6$ & $84.3{\pm}1.8$ & $72.0{\pm}1.9$ & $81.3$ \\
\midrule
\multicolumn{7}{@{}l}{\textit{No group labels}} \\
\hline
ERM~\citep{sagawa2020investigation} & No & -- & $74.9{\pm}2.4$ & $46.9{\pm}2.8$ & $55.6{\pm}0.6$ & $59.1$ \\
XRM+GroupDRO~\citep{pezeshki2024discovering} & No & -- & $88.1$ & $89.1$ & $\underline{72.2}$ & $83.1$ \\
\midrule
\multicolumn{7}{@{}l}{\textit{Validation groups for selection}} \\
\hline
EIIL~\citep{creager2021environment} & No & S & $78.7$ & $83.3$ & $67.0$ & $76.3$ \\
JTT~\citep{liu2021just} & No & S & $86.7$ & $81.1$ & $69.3$ & $79.0$ \\
CnC~\citep{zhang2022correct} & No & S & $88.5{\pm}0.3$ & $88.8{\pm}0.9$ & $68.9{\pm}2.1$ & $82.1$ \\
AFR~\citep{qiu2023simple} & No & S & $90.4{\pm}1.1$ & $82.0{\pm}0.5$ & $68.7{\pm}0.6$ & $80.4$ \\
\midrule
\multicolumn{7}{@{}l}{\textit{Validation groups for fitting and selection}} \\
\hline
SSA~\citep{nam2022spread} & No & F+S & $89.0{\pm}0.6$ & $\mathbf{89.8}{\pm}1.3$ & $69.9{\pm}2.0$ & $82.9$ \\
MAPLE~\citep{zhou2022model} & No & F+S & $91.7$ & $88.0$ & $64.1$ & $81.3$ \\
DFR~\citep{kirichenko2023last} & No & F+S & $\underline{92.9{\pm}0.2}$ & $88.3{\pm}1.1$ & $70.1{\pm}0.8$ & $83.8$ \\
GSR-HF~\citep{qiao2025group} & No & F+S & $87.5{\pm}0.1$ & $86.3{\pm}0.4$ & $68.9{\pm}0.2$ & $80.9$ \\
GSR~\citep{qiao2025group} & No & F+S & $\underline{92.9{\pm}0.0}$ & $87.0{\pm}0.4$ & $71.7{\pm}0.6$ & $\underline{83.9}$ \\
\textbf{ProME (ours)} & No & F+S & $\mathbf{93.1}{\pm}0.3$ & $\underline{89.3{\pm}0.5}$ & $\mathbf{78.7}{\pm}0.2$ & $\mathbf{87.0}$ \\
\bottomrule
\end{tabular*}
\end{table*}

\paragraph{ProME achieves the highest average WGA among methods compared at the same group-label access.} ProME uses validation groups for classifier fitting and selection, matching the access of SSA, MAPLE, DFR, GSR-HF, and GSR. Its average WGA is $87.0\%$, $3.1$ points above GSR ($83.9\%$), the highest baseline entry at this access level. ProME ranks first on Waterbirds ($93.1\%$) and CivilComments ($78.7\%$), and second on CelebA ($89.3{\pm}0.5\%$). The CivilComments result is $6.5$ points above the strongest prior entry in the table ($72.2\%$). These differences compare published results under their original protocols. The following analyses use matched settings to examine the two alignment mechanisms.

\subsubsection{Prototype Margins Enrich Shortcut Conflicts}
\label{sec:diagnostic}
Definition~\ref{def:endogenous} draws the partition from a checkpoint on the trajectory it later regularizes. This subsection examines the induced environments through conflict enrichment and downstream controls over the partition signal, its source encoder, and the invariance penalty.

\begin{figure*}
  \centering
  \includegraphics[width=1.0\textwidth]{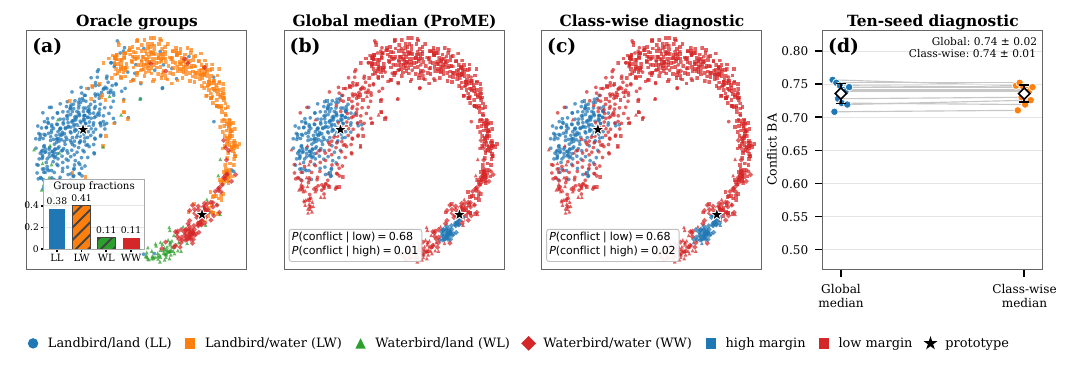}
  \caption{\textbf{Prototype-margin enrichment of shortcut conflicts on Waterbirds.} Panels~\textbf{(a--c)} show the same seed-$42$ test examples in a shared t-distributed stochastic neighbor embedding (t-SNE). \textbf{(a)}~Oracle groups; the inset reports their fractions in the displayed subset. \textbf{(b)}~Environments induced by the global-median rule used by ProME. \textbf{(c)}~Environments induced by a class-wise median diagnostic. Marker shapes denote oracle groups, colors denote inferred environments, and stars denote class prototypes. \textbf{(d)}~Conflict balanced accuracy on the training split over ten seeds. Gray lines connect the same seed; diamonds and error bars show the mean and standard deviation.}
  \label{fig:tsne_wb}
\end{figure*}

\noindent\textbf{Prototype margins concentrate shortcut conflicts in the low-margin environment.} Under the global-median rule, shortcut-conflicting examples constitute $0.68$ of the low-margin environment and $0.01$ of the high-margin environment (Fig.~\ref{fig:tsne_wb}). Fig.~\ref{fig:margin_gallery} shows the same ordering at the example level. In the label-augmented Waterbirds diagnostic, the prototype-margin pipeline reaches $90.31\pm1.29\%$ WGA, compared with $86.76\pm2.30\%$ for the ERM-loss control (Table~\ref{tab:partition_wga}). This auxiliary comparison complements the class-only conflict-enrichment evidence in Fig.~\ref{fig:tsne_wb}. The result is consistent with the conditional separation in Proposition~\ref{prop:Tmargin}. Prototype margins also reach $75.77\%$ on ColoredMNIST, compared with $74.09\%$ for ERM loss. In the label-augmented CelebA diagnostic, the two pipelines yield means that differ by $0.56$ points, with a shared standard deviation of $0.94$ points. Section~\ref{sec:post_repair} examines candidate coverage on CelebA.
\begin{table}
\centering
{
\caption{\textbf{Downstream WGA for prototype-margin and ERM-loss partitions.} This is a pipeline-level comparison: each partition passes through the same Stage~2 repair and selection protocol. For the prototype-margin row, Waterbirds and CelebA are label-augmented diagnostics using the group-balanced prototypes of Eq.~\eqref{eq:balanced-proto}, reported over five and three seeds; ColoredMNIST follows the class-only protocol over ten seeds.}
\label{tab:partition_wga}
\small
\setlength{\tabcolsep}{3.5pt}
\begin{tabular}{lccc}
\toprule
\multirow{2}{*}{Partition} & \multicolumn{3}{c}{Worst-group accuracy (\%)}\\
\cmidrule(lr){2-4}
& Waterbirds & CelebA & ColoredMNIST\\
\midrule
ERM-loss & 86.76\std{2.30} & 85.93\std{0.94}  & 74.09\std{5.75}\\
Prototype-margin     & 90.31\std{1.29} & 85.37\std{0.94} &  75.77\std{5.48}\\
\bottomrule
\end{tabular}}
\end{table}
\begin{figure*}
  \centering
  \rev{\includegraphics[width=1.0\textwidth]{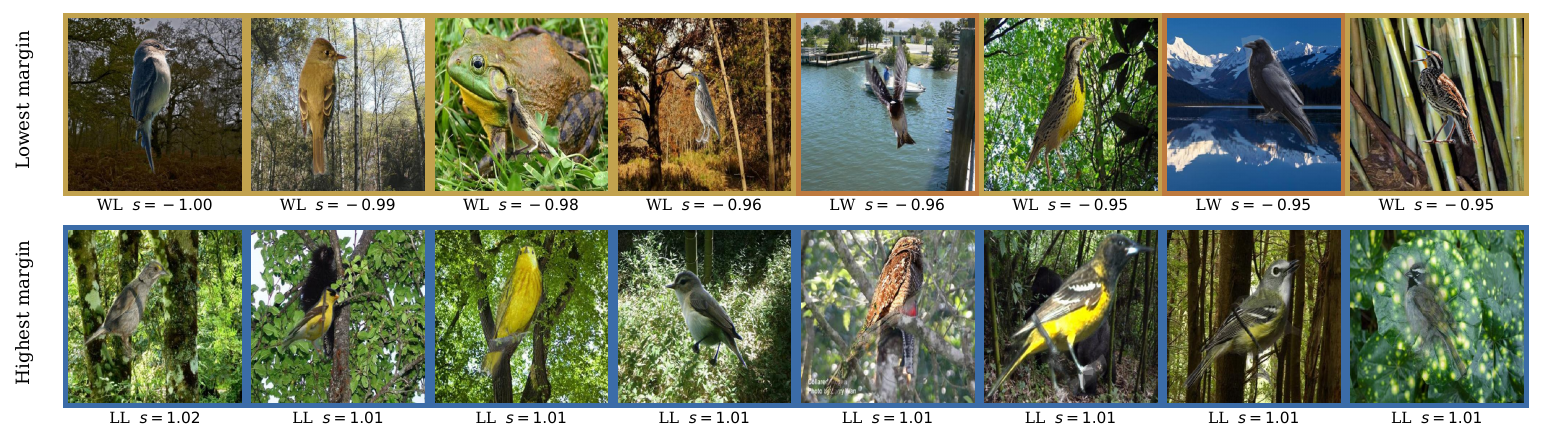}}
  \caption{\textbf{Waterbirds examples ranked by prototype margin.} The displayed lowest-margin examples contain label--background conflicts, whereas the displayed highest-margin examples show aligned pairs. Oracle group labels are included only to support interpretation of the examples.}
  \label{fig:margin_gallery}
\end{figure*}

\begin{table}
\centering
\caption{\textbf{Waterbirds WGA before and after classifier repair.} Each variant changes one element of Stage~1 and then passes through the same Stage~2. \textit{Source} is the encoder the partition is read from. Mean $\pm$ standard deviation over three seeds.}
\label{tab:threearm}
\footnotesize
\setlength{\tabcolsep}{2pt}
\renewcommand{\arraystretch}{1.0}
\begin{tabular}{@{}lccccc@{}}
\toprule
\multirow{2}{*}{Variant} & \multicolumn{2}{c}{Partition} & \multirow{2}{*}{$\lambda$} & \multicolumn{2}{c}{Worst-group accuracy (\%)} \\
\cmidrule(lr){2-3}\cmidrule(lr){5-6}
& Source & Criterion & & Pre-repair & Final \\
\midrule
Live margin & Live & Margin & $3$ & 78.85\std{3.56} & 90.34\std{1.75} \\
Live loss & Live & Loss & $3$ & 81.15\std{1.11} & 91.59\std{1.21} \\
No penalty & Live & Margin & $0$ & 72.95\std{1.02} & 90.71\std{1.66} \\
Frozen reference & Frozen & Margin & $3$ & 68.95\std{3.39} & 90.60\std{0.95} \\
\bottomrule
\end{tabular}
\end{table}

\noindent\textbf{Trajectory-derived margins improve pre-repair WGA in the matched source comparison.} The live-margin and frozen-reference variants use the same margin criterion and invariance weight, but derive margins from different encoders (Table~\ref{tab:threearm}). The live source raises pre-repair WGA from $68.95\pm3.39\%$ to $78.85\pm3.56\%$. After matched classifier repair, their final WGA values are $90.34\pm1.75\%$ and $90.60\pm0.95\%$, respectively. This matched comparison tests the effect of the partition source before repair. Proposition~\ref{prop:T0} separately bounds assignment changes as the representation evolves.

\noindent\textbf{The invariance penalty affects pre-repair WGA, while the global median matches the class-wise diagnostic.} Removing the penalty lowers pre-repair WGA from $78.85\pm3.56\%$ to $72.95\pm1.02\%$ (Table~\ref{tab:threearm}). Their final WGA values after matched repair are $90.34\pm1.75\%$ and $90.71\pm1.66\%$, respectively. The observed penalty effect is concentrated in pre-repair WGA. Replacing the global median with a class-wise median produces similar environment composition ($0.68$ and $0.02$) and conflict balanced accuracy ($0.74\pm0.01$ and $0.74\pm0.02$). The global rule provides comparable diagnostic separation with one threshold.

\subsubsection{Classifier Repair Reshapes Candidate Evaluation}
\label{sec:post_repair}
The second ERAS condition ranks encoder--head pairs by validation WGA after matched classifier repair (Definition~\ref{def:endogenous}). We examine how candidate coverage and classifier repair affect post-repair evaluation.

\noindent\textbf{Multi-candidate selection improves post-repair WGA on CelebA.} Fig.~\ref{fig:threearm_slope}(c) holds the repair procedure fixed and varies only the candidate set. A single retained checkpoint reaches $86.94\%$ WGA after repair. Ranking repaired candidates from milestone and random pools reaches $89.31\%$ and $89.87\%$, respectively. Both pools outperform the single-checkpoint setting in these runs.
\begin{figure*}[!t]
  \centering
  \rev{\includegraphics[width=1.0\textwidth]{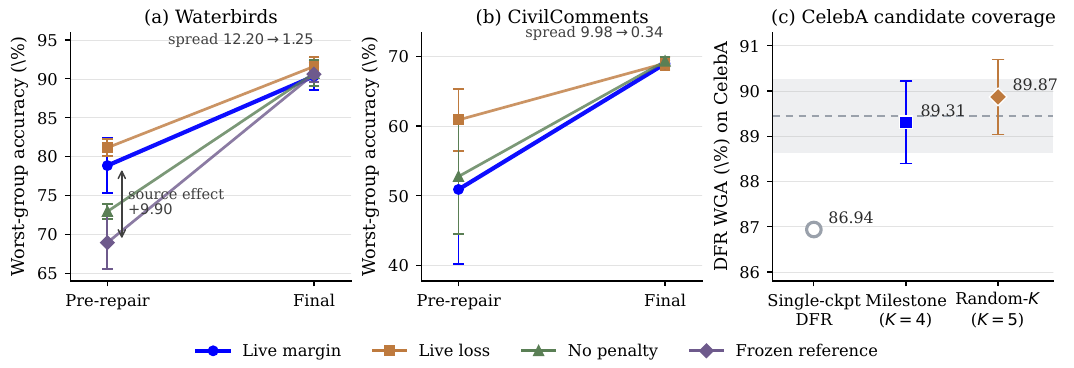}}
\caption{\textbf{Repair-aware candidate evaluation.} \textbf{(a)}~Waterbirds and \textbf{(b)}~CivilComments before and after classifier repair; each line connects one Stage~1 variant of Table~\ref{tab:threearm}. \textbf{(c)}~CelebA with single-checkpoint, milestone, and random candidate sets under a fixed repair procedure; the band marks the full multi-candidate pipeline.}
\label{fig:threearm_slope}
\end{figure*}

\noindent\textbf{Classifier repair narrows performance differences among Stage~1 variants.} Across four Waterbirds variants, pre-repair WGA spans $12.20$ points, whereas matched repair reduces the final spread to $1.25$ points (Table~\ref{tab:threearm}). CivilComments shows the same pattern under the shortened schedule in Fig.~\ref{fig:threearm_slope}(b), where the spread contracts from $9.98$ to $0.34$ points. These contractions show that comparisons based on Stage~1 heads need not persist after repair. Repair-aware selection instead evaluates the encoder--head pairs used at deployment.

\noindent\textbf{Classifier repair reduces a posterior-alignment diagnostic on ColoredMNIST.} Fig.~\ref{fig:posterior_alignment} compares empirical $\mathbb{P}(y{=}1\mid h,e)$ across two training environments and the reversed test environment. ProME has the largest integrated gap before repair and the smallest gap after repair, below ERM and the ERM-loss control. he posterior-alignment gap characterizes how classifier repair changes score geometry under shortcut reversal. Corollary~\ref{cor:T2-oracle} states the transfer condition in terms of the total-variation quantity $\rho$.
\begin{figure*}
  \centering
  \includegraphics[width=0.96\textwidth]{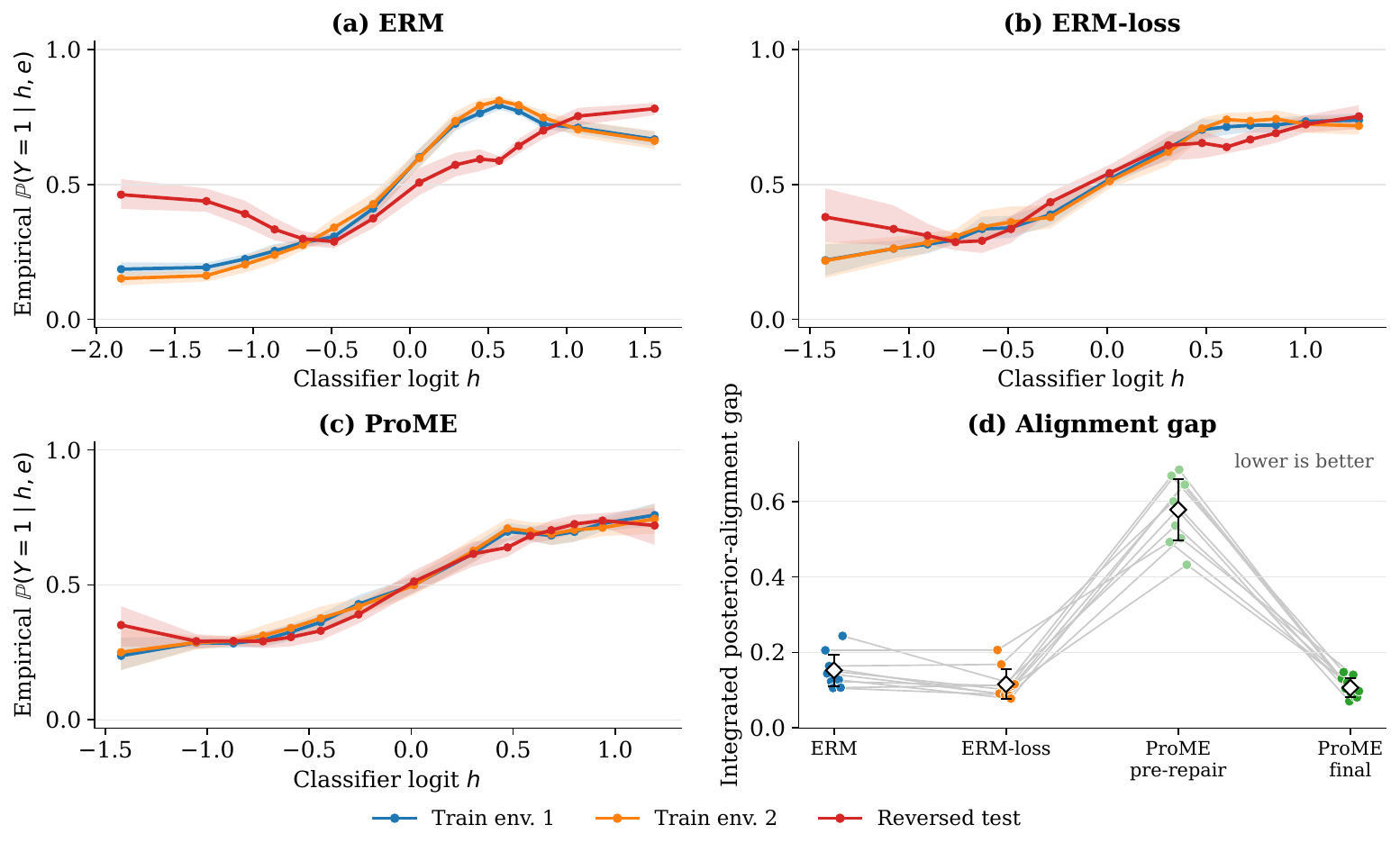}
  \caption{\textbf{Classifier repair reduces the posterior-alignment gap on ColoredMNIST.} Panels (a)--(c) plot the empirical $\mathbb{P}(y{=}1\mid h,e)$ for two training environments and the reversed test environment under ERM, ERM-loss, and ProME after classifier repair. Panel (d) reports the integrated posterior-alignment gap for ERM, ERM-loss, and ProME before and after repair. A lower value indicates better alignment. Gray lines connect the same seed. Curves use $50$ equal-width bins with at least five samples per bin.}
  \label{fig:posterior_alignment}
\end{figure*}
\subsection{Seed Budget, Label Budget, and Hyperparameters}\label{sec:robustness_analysis}
\paragraph{Real-world WGA remains stable across the fixed seed budgets.} Across ten fixed seeds, Waterbirds, CelebA, and CivilComments have standard deviations of $0.62$, $0.51$, and $0.30$ WGA points (Fig.~\ref{fig:stability}). Their ten-seed means remain within $0.62$ points of the three-seed results in Table~\ref{tab:main}. Fig.~\ref{fig:sensitivity}(d) shows little change as the seed budget grows from $3$ to $10$. ColoredMNIST has a wider standard deviation of $5.48$ points, so the stability conclusion is limited to the three real-world benchmarks.

\begin{figure}
  \centering
  \includegraphics[width=\columnwidth]{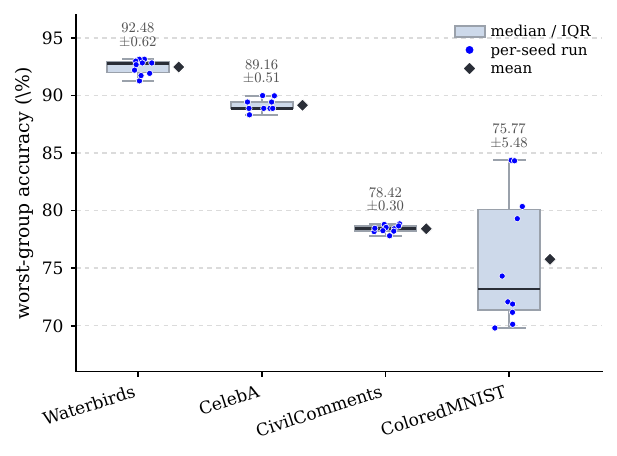}
  \captionof{figure}{\textbf{WGA across ten fixed seeds.} Boxes show interquartile ranges, center lines show medians, points show individual runs, and diamonds show means.}
  \label{fig:stability}
\end{figure}

\paragraph{The value of validation-group labels depends on minority-group support.}
Stage~2 uses group-annotated validation data for classifier repair. Table~\ref{tab:label} holds the ProME representation fixed and changes only the group-balanced data used to fit the head. Val-DFR exceeds train-DFR by $18.68$ points on Waterbirds, whose smallest training group contains $56$ examples. The gap is $1.83$ points on CelebA, whose smallest group contains $1{,}387$ examples. Across six ColoredMNIST settings, the train-DFR minus val-DFR gap approaches zero as $n_{\min}$ increases (Fig.~\ref{fig:cmnist_size}). These results support validation-group labels for classifier repair when training minority support is limited. The source and partition controls above examine the Stage~1 environment mechanism. Stage~1 representation learning uses no training-group labels.
\begin{table}
\centering
\caption{\textbf{Effect of the data used to fit the classifier-repair head.} The \methodname{} representation is fixed; only the group-balanced fitting data changes. Mean $\pm$ standard deviation over ten seeds. $N$ is the training-set size and $n_{\min}$ its smallest $(y,a)$ group.}
\label{tab:label}
\footnotesize
\setlength{\tabcolsep}{4pt}
\begin{tabular}{lccc}
\toprule
 Dataset ($N$) & \makecell{Smallest\\group $n_{\min}$} & train-DFR (\%) & val-DFR (\%)  \\
\midrule
 Waterbirds (4,795)   & 56  & 73.80\std{2.51}  & 92.48\std{0.62} \\
\midrule
 CelebA (162,770) & 1{,}387  & 87.61\std{1.00} & 89.44\std{0.82} \\
\bottomrule
\end{tabular}
\end{table}

\begin{figure}
  \centering
  \rev{\includegraphics[width=0.96\columnwidth]{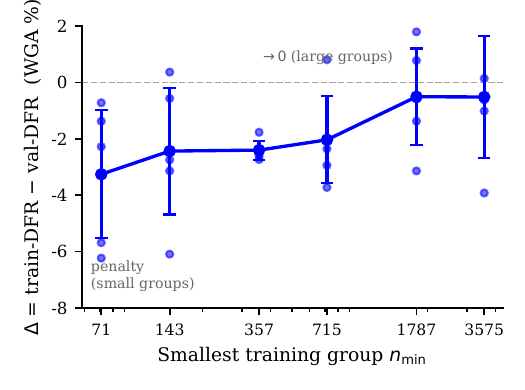}}
  \captionof{figure}{\textbf{Effect of minority support on classifier-repair label source.} train-DFR minus val-DFR WGA versus the smallest training group $n_{\min}$, reported as mean $\pm$ standard deviation over five seeds.}
  \label{fig:cmnist_size}
\end{figure}

\paragraph{The label-augmented Waterbirds diagnostic is stable across the swept settings.}
Fig.~\ref{fig:sensitivity}(a--c) sweeps the invariance weight, the temperature, and the prototype-refresh period on Waterbirds under the label-augmented refinement with group-balanced prototypes. Every swept value of all three hyperparameters stays inside a $0.93$-point band, from $92.14\%$ to $93.07\%$, and this includes the endpoints $\lambda=30$ at ten times the default weight and $T_{\mathrm{proto}}=\infty$, which suspends prototype refreshing altogether. Within this Waterbirds sweep, WGA is insensitive to an order-of-magnitude change in the Stage~1 invariance weight.

\begin{figure}
  \begin{minipage}{1.0\columnwidth}
  \centering
  \includegraphics[width=1.0\columnwidth]{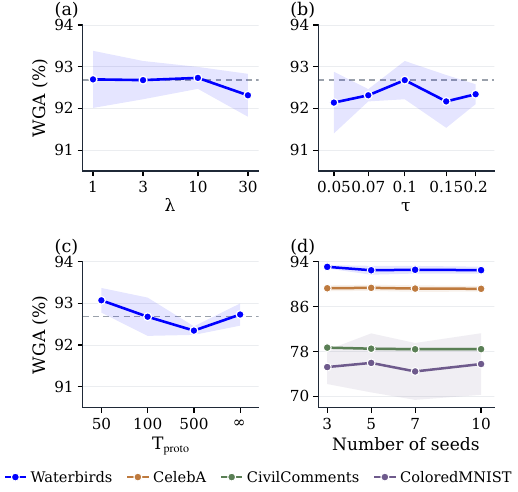}
  \captionof{figure}{\textbf{Hyperparameter and seed-budget sensitivity.} \textbf{(a--c)}~Label-augmented Waterbirds diagnostic across the invariance weight, temperature, and prototype-refresh period; bands show standard deviations and dashed lines mark defaults. \textbf{(d)}~WGA across seed budgets.}
  \label{fig:sensitivity}
  \end{minipage}
\end{figure}


\section{Conclusion}\label{sec:conclusion}
In this paper, we presented ProME, a two-stage and backbone-agnostic framework designed to align environment construction and model selection with the deployed predictor when training-group labels are unavailable. By formulating group robustness without training-group labels as the endogenous environments with repair-aware selection (ERAS) problem, ProME couples trajectory-based environment construction with post-repair comparison of encoder--head pairs. Theoretically, we characterize shortcut-conflict separation and partition stability under explicit conditions, and bound worst inferred-environment risk for a fixed predictor and partition, with transfer to oracle groups under total-variation alignment. Empirically, ProME achieves the highest average WGA among methods under matched group-label access, while analyses show pre-repair gains from trajectory-derived margins and reduced differences among Stage~1 variants after classifier repair. As a backbone-agnostic framework that learns representations without training-group labels and deploys a single encoder--head pair, ProME offers deployment alignment as a practical design principle for group-robust learning, and we hope this work facilitates future research across diverse forms of subpopulation shift.

\section*{CRediT authorship contribution statement}
\textbf{Qianqian Wang:} Conceptualization, Methodology, Software, Validation, Formal analysis, Investigation, Writing -- original draft. 
\textbf{Yunshan Li:} Writing – review \& editing.
\textbf{Dawei Huang:} Software, Resources.
\textbf{Wenwu Gong:} Writing – review \& editing,
Methodology, Conceptualization.
\textbf{Lili Yang:} Supervision,
Resources, Writing – review \& editing.

\section*{Declaration of competing interest}
The authors declare that they have no known competing financial interests or personal relationships that could have appeared to influence the work reported in this paper.

\section*{Acknowledgements}
This research was funded by the SUSTech Presidential Postdoctoral Fellowship, the China Postdoctoral Science Foundation (Grant No. 2025M773057), Shenzhen Science and Technology Program (Grant No. ZDSYS20210623092007023), and the Shenzhen Key Laboratory of Safety and Security for Next Generation of Industrial Internet, Southern University of Science and Technology.

\section*{Data availability}
The three real-world datasets are publicly available from their cited sources. ColoredMNIST is generated from MNIST using the protocol in Section~\ref{sec:experiments}. Our code will be publicly available upon acceptance.

\section*{Declaration of generative AI use}
The authors used generative AI tools only for language polishing and readability improvement.  All technical ideas, experiments, analyses, and conclusions were developed and verified by the authors, who take full responsibility for the content of this paper.

\appendix
\numberwithin{equation}{section}
\makeatletter
\@addtoreset{table}{section}
\@addtoreset{figure}{section}
\makeatother
\renewcommand{\thetable}{\thesection.\arabic{table}}
\renewcommand{\thefigure}{\thesection.\arabic{figure}}

\section{Proofs for Theoretical Analysis}\label{app:theory-proofs}

This appendix provides the complete proofs for Section~\ref{sec:theory}. The assumptions are local to the corresponding results and do not alter the algorithm in Section~\ref{sec:proposed-method}. Lemma~\ref{prop:T1} and Proposition~\ref{prop:Tmargin} are conditional structural results; Proposition~\ref{prop:T0} is deterministic on a fixed sample; and Proposition~\ref{prop:T2} is an algebraic bound for fixed cell risks. Corollary~\ref{cor:T2-oracle} adds a conditional population-risk transfer under total-variation alignment. No finite-sample empirical-to-population guarantee is claimed.

\begin{assumption}[Prototype geometry]\label{ass:proto}
The representation is sphere-normalized, $\|\phi(x)\|_2=1$. The labeled sample used to form the prototypes contains both classes, and each empirical class mean is non-zero:
\begin{equation*}
\bar\phi_c=\frac{1}{|\mathcal D_c|}\sum_{i:y_i=c}\phi(x_i),
\qquad
\mu_c=\bar\phi_c/\|\bar\phi_c\|_2.
\end{equation*}
The two empirical prototypes are distinct and the temperature satisfies $\tau>0$.
\end{assumption}

\subsection{Proof of Lemma~\ref{prop:T1}}\label{app:proof-T1}

\begin{proof}
Because $\phi(x)$ and both prototypes have unit norm, every cosine in Eqs.~\eqref{eq:proto-logit} and~\eqref{eq:proto-margin-score} is an inner product. Hence
\begin{equation*}
f_\phi(x)
=\tau^{-1}\langle\phi(x),\mu_1-\mu_0\rangle
=\tau^{-1}\langle\phi(x),w^\star\rangle.
\end{equation*}
If $y=1$, the margin is $\langle\phi(x),\mu_1-\mu_0\rangle$. If $y=0$, it is $\langle\phi(x),\mu_0-\mu_1\rangle$. Since $\widetilde y=+1$ in the first case and $-1$ in the second, both cases are summarized by Eq.~\eqref{eq:T1-equivalence}.
\end{proof}

\subsection{Proof of Proposition~\ref{prop:Tmargin}}\label{app:proof-Tmargin}

\begin{proof}
Lemma~\ref{prop:T1} and Eq.~\eqref{eq:shortcut-model} give
\begin{align*}
s(X;Y)
&=\langle\widetilde Y\phi(X),w^\star\rangle\\
&=\langle u+Qv+\xi,\alpha u+\beta v\rangle\\
&=\alpha\|u\|_2^2+Q\beta\|v\|_2^2
+\langle\xi,w^\star\rangle,
\end{align*}
where the last equality uses $u^\top v=0$. Taking the conditional expectation and using $\mathbb E[\langle\xi,w^\star\rangle\mid Q]=0$ yields
\begin{equation*}
\mathbb E[s\mid Q=q]
=\alpha\|u\|_2^2+q\beta\|v\|_2^2,
\qquad q\in\{-1,+1\}.
\end{equation*}
Subtracting the two conditional means proves Eq.~\eqref{eq:margin-gap}.

On the probability-one event where the bounded projected-noise condition holds, every conflicting example satisfies
\begin{equation*}
s\le \alpha\|u\|_2^2-\beta\|v\|_2^2+\delta,
\end{equation*}
whereas every aligned example satisfies
\begin{equation*}
s\ge \alpha\|u\|_2^2+\beta\|v\|_2^2-\delta.
\end{equation*}
The second lower bound exceeds the first upper bound by
$2(\beta\|v\|_2^2-\delta)>0$. Thus all conflicting margins precede all aligned margins in the empirical ordering almost surely. With distinct margins, the conventional empirical median assigns exactly $\lceil n/2\rceil$ observations to the low-margin cell. If $n_{\mathrm{cf}}\le n/2$, all conflicting observations lie in that cell, whose conflict fraction is $n_{\mathrm{cf}}/\lceil n/2\rceil$; for even $n$, this reduces to $2\pi_{\mathrm{cf}}$.
\end{proof}

\subsection{Proof of Proposition~\ref{prop:T0}}\label{app:proof-T0}

\begin{proof}
Write
\begin{equation*}
\bar\phi_c=|\mathcal D_c|^{-1}\sum_{i:y_i=c}\phi(x_i),
\qquad
\bar\phi'_c=|\mathcal D_c|^{-1}\sum_{i:y_i=c}\phi'(x_i).
\end{equation*}
The definition of $\varepsilon$ and the triangle inequality give
\begin{equation}
\|\bar\phi_c-\bar\phi'_c\|_2\le\varepsilon.
\label{eq:T0-mean}
\end{equation}
For non-zero vectors $a$ and $b$,
\begin{equation}
\left\|\frac{a}{\|a\|_2}-\frac{b}{\|b\|_2}\right\|_2
\le
\frac{2\|a-b\|_2}{\min(\|a\|_2,\|b\|_2)}.
\label{eq:T0-normalization}
\end{equation}
Applying Eq.~\eqref{eq:T0-normalization} to the two empirical class means and using Eq.~\eqref{eq:T0-mean} proves
\begin{equation}
\max_c\|\mu_c(\phi)-\mu_c(\phi')\|_2
\le 2\varepsilon/\beta_n.
\label{eq:T0-prototype}
\end{equation}

For any example and either class $c$, adding and subtracting
$\langle\phi'(x_i),\mu_c(\phi)\rangle$ gives
\begin{align*}
&|\langle\phi(x_i),\mu_c(\phi)\rangle
-\langle\phi'(x_i),\mu_c(\phi')\rangle|\\
&\qquad\le
\|\phi(x_i)-\phi'(x_i)\|_2
+\|\mu_c(\phi)-\mu_c(\phi')\|_2\\
&\qquad\le (1+2/\beta_n)\varepsilon.
\end{align*}
The margin is the difference of two such terms. Therefore
\begin{equation}
\max_i|s_i(\phi)-s_i(\phi')|
\le (2+4/\beta_n)\varepsilon=L_s\varepsilon.
\label{eq:T0-margin}
\end{equation}

Set $\eta=L_s\varepsilon$. A coordinatewise perturbation of a finite sequence by at most $\eta$ changes each order statistic by at most $\eta$. The same bound applies to the empirical median, whether defined as the central order statistic or the average of the two central order statistics. Equation~\eqref{eq:T0-margin} gives
\begin{equation}
|\hat m_n(\phi)-\hat m_n(\phi')|\le\eta.
\label{eq:T0-median}
\end{equation}
Equations~\eqref{eq:T0-prototype}--\eqref{eq:T0-median} prove Eq.~\eqref{eq:T0-stability}.

It remains to characterize which assignments can change. Consider an example assigned to the low-margin cell under $\phi$ and to the high-margin cell under $\phi'$. Equations~\eqref{eq:T0-margin} and~\eqref{eq:T0-median} imply
\begin{equation*}
\hat m_n(\phi)-2\eta < s_i(\phi)\le\hat m_n(\phi).
\end{equation*}
The reverse reassignment similarly implies
\begin{equation*}
\hat m_n(\phi)<s_i(\phi)\le\hat m_n(\phi)+2\eta.
\end{equation*}
Thus every reassigned example lies in
$\{|s_i(\phi)-\hat m_n(\phi)|\le2\eta\}$. Summing its indicator over the fixed sample and dividing by $n$ proves Eq.~\eqref{eq:T0-reassignment}.
\end{proof}

\subsection{Proof of Proposition~\ref{prop:T2} and Corollary~\ref{cor:T2-oracle}}\label{app:proof-T2}

\begin{proof}[Proof of Proposition~\ref{prop:T2}]
Let $d_e=r_e-\bar{\mathcal R}$, choose one $e^\star\in\arg\max_e d_e$, and set $M=d_{e^\star}$. Since $\sum_e d_e=0$, the $K-1$ deviations indexed by $e\ne e^\star$ sum to $-M$. By Cauchy--Schwarz,
\begin{equation*}
\sum_{e\ne e^\star}d_e^2
\ge
\frac{1}{K-1}
\left(\sum_{e\ne e^\star}d_e\right)^2
=
\frac{M^2}{K-1}.
\end{equation*}
Consequently,
\begin{equation*}
\mathcal P_{\mathrm{REx}}
=\frac{1}{K}\sum_e d_e^2
\ge
\frac{1}{K}\left(M^2+\frac{M^2}{K-1}\right)
=
\frac{M^2}{K-1}.
\end{equation*}
Therefore $M\le\sqrt{(K-1)\mathcal P_{\mathrm{REx}}}$, and adding $\bar{\mathcal R}$ proves Eq.~\eqref{eq:T2-bound}. For $K=2$, the two centered risks are $d$ and $-d$, so $\mathcal P_{\mathrm{REx}}=d^2$ and the bound is an equality.
\end{proof}

\begin{proof}[Proof of Corollary~\ref{cor:T2-oracle}]
Under the convention $\mathrm{TV}(P,Q)=\sup_A|P(A)-Q(A)|$, the variational characterization for a loss in $[0,B]$ gives
\begin{equation*}
|\mathbb E_{Q_g}\ell-\mathbb E_{Q_{e(g)}}\ell|
\le B\,\mathrm{TV}(Q_g,Q_{e(g)})
\le B\rho.
\end{equation*}
Hence $\mathcal R_g(f)\le\mathcal R_{e(g)}(f)+B\rho$. Maximizing over oracle groups gives
\begin{equation*}
\mathcal R_{\mathrm{wg}}^{(\mathcal G)}(f)
\le
\mathcal R_{\mathrm{wg}}^{(\hat{\mathcal E})}(f)+B\rho.
\end{equation*}
Applying Proposition~\ref{prop:T2} to the first term proves Eq.~\eqref{eq:T2-oracle}.
\end{proof}

\section{Supplementary Experiments and Tables}\label{app:tables}

\subsection{ColoredMNIST Protocol Comparison}
Table~\ref{tab:cmnist} separates two evaluation conventions. Published ERM and oracle-IRM results use reversed-environment test accuracy. The matched block uses WGA under the ProME protocol and supports the diagnostic discussion in Section~\ref{sec:diagnostic}.
\begin{table}[!t]
\centering
{
\caption{\textbf{ColoredMNIST under two evaluation protocols.} Published ERM and oracle-IRM results use reversed-environment test accuracy; the remaining rows use WGA under the ProME protocol. Results are mean $\pm$ standard deviation over ten seeds.}
\label{tab:cmnist}
\small
\setlength{\tabcolsep}{8pt}
\resizebox{\columnwidth}{!}{%
\begin{tabular}{lccc}
\toprule
Method  & Seeds & Metric & Score (\%) \\
\midrule
ERM (published)          & 10 & Test accuracy & 17.1\std{0.6} \\
IRM, oracle environments (published) & 10 & Test accuracy & 66.9\std{2.5} \\
\midrule
JTT                    & 10 & WGA & 68.14\std{2.09} \\
DFR                    & 10 & WGA & 70.86\std{0.55} \\
Loss-split (EIIL-style) & 10 & WGA & 74.09\std{5.75} \\
\textbf{\methodname}  & 10 & WGA & \textbf{75.77\std{5.48}} \\
\bottomrule
\end{tabular}}
}
\begin{flushleft}
\footnotesize \textit{Note.} Published ERM and IRM values are quoted from Table~2 of~\cite{arjovsky2019invariant}. The WGA block follows Section~\ref{sec:experiments}; its loss-split row changes the partition criterion within the same pipeline.
\end{flushleft}
\end{table}

\subsection{Single-Seed Pipeline Decomposition}

Table~\ref{tab:ablation} provides a seed-$42$ decomposition under a fixed budget and candidate set. Classifier repair and repair-aware selection produce the largest change in all displayed arms. Replacing the partition also changes the final result, while removing the invariance penalty has a smaller effect in this run. Because the table contains one seed, it is used as a mechanism diagnostic rather than a population estimate. The multi-seed controls in Table~\ref{tab:threearm} provide the main evidence.
\begin{table}
\centering
\caption{\textbf{Single-seed Waterbirds pipeline decomposition.} All variants use class-only prototypes and the same training budget, candidate set, and validation protocol. Pre-repair evaluates Stage~1; Final includes classifier repair and selection; $\textit{Gain}=\mathrm{Final}-\mathrm{Pre\mbox{-}repair}$.}
\label{tab:ablation}
\setlength{\tabcolsep}{2.5pt}
\resizebox{\columnwidth}{!}{%
\begin{tabular}{lcccccc}
\toprule
Variant & Partition & Invariance & DFR & Pre-repair & Final & \textit{Gain} \\
\midrule
\textbf{ProME (full)} & Prototype margin & \cmark & \cmark & 72.27 & \textbf{91.74} & \textbf{+19.47} \\
ERM-loss control & ERM loss & \cmark & \cmark & \textbf{85.36} & 86.29 & +0.93 \\
Without invariance & Prototype margin & \xmark & \cmark & 72.90 & 91.28 & +18.38 \\
\bottomrule
\end{tabular}
}
\end{table}

\subsection{Training Details}

\paragraph{Training schedules.}
The ERM warm-up lasts $5$ epochs on the real-world benchmarks and $100$ epochs on ColoredMNIST. Invariant training then runs for $3{,}000$ steps on Waterbirds and CivilComments, $5{,}000$ on CelebA, and $1{,}000$ on ColoredMNIST. AdamW uses learning rates of $10^{-4}$ for ResNet-50 and $2\times10^{-5}$ for BERT-base, with weight decay $10^{-4}$ and $10^{-2}$, respectively. Batch sizes are $32$, $64$, and $16$ on Waterbirds, CelebA, and CivilComments. The penalty weight increases linearly from $1$ to $3$ on the vision benchmarks and to $10$ on CivilComments. Prototypes are refreshed every $T_{\mathrm{proto}}=100$ steps by default, while the three-seed Waterbirds headline runs use $T_{\mathrm{proto}}=50$. The training loss is logistic cross-entropy. Loss boundedness is required only for the optional total-variation transfer in Corollary~\ref{cor:T2-oracle}. The controlled analyses in Sec.~\ref{sec:diagnostic} and Sec.~\ref{sec:post_repair} run repeated seeds and repeated repair passes over each candidate pool, so we run the seed-heavy sweeps on Waterbirds and ColoredMNIST and reserve CelebA, whose $162{,}770$ training images make each pass an order of magnitude more expensive, for the settings where it carries the comparison.

\bibliographystyle{elsarticle-num}
\bibliography{refs}

\begin{thebibliography}{10}
\expandafter\ifx\csname url\endcsname\relax
  \def\url#1{\texttt{#1}}\fi
\expandafter\ifx\csname urlprefix\endcsname\relax\def\urlprefix{URL }\fi
\expandafter\ifx\csname href\endcsname\relax
  \def\href#1#2{#2} \def\path#1{#1}\fi

\bibitem{sagawa2019distributionally}
S.~Sagawa*, P.~W. Koh*, T.~B. Hashimoto, P.~Liang,
  \href{https://openreview.net/forum?id=ryxGuJrFvS}{Distributionally robust
  neural networks}, in: International Conference on Learning Representations,
  2020.
\newline\urlprefix\url{https://openreview.net/forum?id=ryxGuJrFvS}

\bibitem{yang2023change}
Y.~Yang, H.~Zhang, D.~Katabi, M.~Ghassemi,
  \href{https://proceedings.mlr.press/v202/yang23s.html}{Change is hard: A
  closer look at subpopulation shift}, in: A.~Krause, E.~Brunskill, K.~Cho,
  B.~Engelhardt, S.~Sabato, J.~Scarlett (Eds.), Proceedings of the 40th
  International Conference on Machine Learning, Vol. 202 of Proceedings of
  Machine Learning Research, PMLR, 2023, pp. 39584--39622.
\newline\urlprefix\url{https://proceedings.mlr.press/v202/yang23s.html}

\bibitem{ye2026the}
W.~Ye, L.~Jiang, E.~Xie, G.~Zheng, Y.~Ma, X.~Cao, D.~Guo, D.~Qi, Z.~He,
  Y.~Tian, C.~W. Porter, M.~Coffee, Z.~Zeng, S.~Li, Z.~Wang, T.-H.~K. Huang,
  J.~M. Rehg, H.~Kautz, A.~Zhang,
  \href{https://openreview.net/forum?id=kIuqPmS1b1}{The clever hans mirage: A
  comprehensive survey on spurious correlations in machine learning},
  Transactions on Machine Learning Research (2026).
\newline\urlprefix\url{https://openreview.net/forum?id=kIuqPmS1b1}

\bibitem{geirhos2020shortcut}
R.~Geirhos, J.-H. Jacobsen, C.~Michaelis, R.~Zemel, W.~Brendel, M.~Bethge,
  F.~A. Wichmann, \href{https://doi.org/10.1038/s42256-020-00257-z}{Shortcut
  learning in deep neural networks}, Nature Machine Intelligence 2~(11) (2020)
  665--673.
\newblock \href {https://doi.org/10.1038/s42256-020-00257-z}
  {\path{doi:10.1038/s42256-020-00257-z}}.
\newline\urlprefix\url{https://doi.org/10.1038/s42256-020-00257-z}

\bibitem{liu2015deep}
Z.~Liu, P.~Luo, X.~Wang, X.~Tang, Deep learning face attributes in the wild,
  in: 2015 IEEE International Conference on Computer Vision (ICCV), 2015, pp.
  3730--3738.
\newblock \href {https://doi.org/10.1109/ICCV.2015.425}
  {\path{doi:10.1109/ICCV.2015.425}}.

\bibitem{liu2021just}
E.~Z. Liu, B.~Haghgoo, A.~S. Chen, A.~Raghunathan, P.~W. Koh, S.~Sagawa,
  P.~Liang, C.~Finn, \href{https://proceedings.mlr.press/v139/liu21f.html}{Just
  train twice: Improving group robustness without training group information},
  in: M.~Meila, T.~Zhang (Eds.), Proceedings of the 38th International
  Conference on Machine Learning, Vol. 139 of Proceedings of Machine Learning
  Research, PMLR, 2021, pp. 6781--6792.
\newline\urlprefix\url{https://proceedings.mlr.press/v139/liu21f.html}

\bibitem{pezeshki2024discovering}
M.~Pezeshki, D.~Bouchacourt, M.~Ibrahim, N.~Ballas, P.~Vincent, D.~Lopez-Paz,
  \href{https://proceedings.mlr.press/v235/pezeshki24a.html}{Discovering
  environments with {XRM}}, in: R.~Salakhutdinov, Z.~Kolter, K.~Heller,
  A.~Weller, N.~Oliver, J.~Scarlett, F.~Berkenkamp (Eds.), Proceedings of the
  41st International Conference on Machine Learning, Vol. 235 of Proceedings of
  Machine Learning Research, PMLR, 2024, pp. 40551--40569.
\newline\urlprefix\url{https://proceedings.mlr.press/v235/pezeshki24a.html}

\bibitem{sohoni2020no}
N.~Sohoni, J.~Dunnmon, G.~Angus, A.~Gu, C.~R\'{e},
  \href{https://proceedings.neurips.cc/paper_files/paper/2020/file/e0688d13958a19e087e123148555e4b4-Paper.pdf}{No
  subclass left behind: Fine-grained robustness in coarse-grained
  classification problems}, in: H.~Larochelle, M.~Ranzato, R.~Hadsell,
  M.~Balcan, H.~Lin (Eds.), Advances in Neural Information Processing Systems,
  Vol.~33, Curran Associates, Inc., 2020, pp. 19339--19352.
\newline\urlprefix\url{https://proceedings.neurips.cc/paper_files/paper/2020/file/e0688d13958a19e087e123148555e4b4-Paper.pdf}

\bibitem{creager2021environment}
E.~Creager, J.-H. Jacobsen, R.~Zemel,
  \href{https://proceedings.mlr.press/v139/creager21a.html}{Environment
  inference for invariant learning}, in: M.~Meila, T.~Zhang (Eds.), Proceedings
  of the 38th International Conference on Machine Learning, Vol. 139 of
  Proceedings of Machine Learning Research, PMLR, 2021, pp. 2189--2200.
\newline\urlprefix\url{https://proceedings.mlr.press/v139/creager21a.html}

\bibitem{han2024improving}
Y.~Han, D.~Zou, \href{https://proceedings.mlr.press/v235/han24g.html}{Improving
  group robustness on spurious correlation requires preciser group inference},
  in: R.~Salakhutdinov, Z.~Kolter, K.~Heller, A.~Weller, N.~Oliver,
  J.~Scarlett, F.~Berkenkamp (Eds.), Proceedings of the 41st International
  Conference on Machine Learning, Vol. 235 of Proceedings of Machine Learning
  Research, PMLR, 2024, pp. 17480--17504.
\newline\urlprefix\url{https://proceedings.mlr.press/v235/han24g.html}

\bibitem{zhang2022correct}
M.~Zhang, N.~S. Sohoni, H.~R. Zhang, C.~Finn, C.~Re,
  \href{https://proceedings.mlr.press/v162/zhang22z.html}{Correct-n-contrast: a
  contrastive approach for improving robustness to spurious correlations}, in:
  K.~Chaudhuri, S.~Jegelka, L.~Song, C.~Szepesvari, G.~Niu, S.~Sabato (Eds.),
  Proceedings of the 39th International Conference on Machine Learning, Vol.
  162 of Proceedings of Machine Learning Research, PMLR, 2022, pp.
  26484--26516.
\newline\urlprefix\url{https://proceedings.mlr.press/v162/zhang22z.html}

\bibitem{qiu2023simple}
S.~Qiu, A.~Potapczynski, P.~Izmailov, A.~G. Wilson,
  \href{https://proceedings.mlr.press/v202/qiu23c.html}{Simple and fast group
  robustness by automatic feature reweighting}, in: A.~Krause, E.~Brunskill,
  K.~Cho, B.~Engelhardt, S.~Sabato, J.~Scarlett (Eds.), Proceedings of the 40th
  International Conference on Machine Learning, Vol. 202 of Proceedings of
  Machine Learning Research, PMLR, 2023, pp. 28448--28467.
\newline\urlprefix\url{https://proceedings.mlr.press/v202/qiu23c.html}

\bibitem{liao2026decorr}
Y.~Liao, Q.~Wu, Y.~Wu, X.~Yan,
  \href{https://www.sciencedirect.com/science/article/pii/S0893608026001899}{Decorr:
  Environment partitioning for invariant learning and ood generalization},
  Neural Networks 199 (2026) 108727.
\newblock \href {https://doi.org/https://doi.org/10.1016/j.neunet.2026.108727}
  {\path{doi:https://doi.org/10.1016/j.neunet.2026.108727}}.
\newline\urlprefix\url{https://www.sciencedirect.com/science/article/pii/S0893608026001899}

\bibitem{wang2026environment}
S.~Wang, M.~Sun, Q.~Huang, Y.~Wang,
  \href{https://www.sciencedirect.com/science/article/pii/S0004370226000482}{Environment
  promoted invariant information learning for graph out-of-distribution
  generalization}, Artificial Intelligence 355 (2026) 104522.
\newblock \href {https://doi.org/https://doi.org/10.1016/j.artint.2026.104522}
  {\path{doi:https://doi.org/10.1016/j.artint.2026.104522}}.
\newline\urlprefix\url{https://www.sciencedirect.com/science/article/pii/S0004370226000482}

\bibitem{kirichenko2023last}
P.~Kirichenko, P.~Izmailov, A.~G. Wilson,
  \href{https://openreview.net/forum?id=THOOBy1uWVH}{Last layer re-training is
  sufficient for robustness to spurious correlations}, in: ICML 2022: Workshop
  on Spurious Correlations, Invariance and Stability, 2022.
\newline\urlprefix\url{https://openreview.net/forum?id=THOOBy1uWVH}

\bibitem{labonte2024towards}
T.~LaBonte, V.~Muthukumar, A.~Kumar,
  \href{https://proceedings.neurips.cc/paper_files/paper/2023/file/265bee74aee86df77e8e36d25e786ab5-Paper-Conference.pdf}{Towards
  last-layer retraining for group robustness with fewer annotations}, in:
  A.~Oh, T.~Naumann, A.~Globerson, K.~Saenko, M.~Hardt, S.~Levine (Eds.),
  Advances in Neural Information Processing Systems, Vol.~36, Curran
  Associates, Inc., 2023, pp. 11552--11579.
\newblock \href {https://doi.org/10.52202/075280-0509}
  {\path{doi:10.52202/075280-0509}}.
\newline\urlprefix\url{https://proceedings.neurips.cc/paper_files/paper/2023/file/265bee74aee86df77e8e36d25e786ab5-Paper-Conference.pdf}

\bibitem{hill2026on}
J.~C. Hill, T.~LaBonte, X.~Zhang, V.~Muthukumar,
  \href{https://openreview.net/forum?id=h81ztbrkFb}{On the unreasonable
  effectiveness of last-layer retraining}, Transactions on Machine Learning
  Research (2026).
\newline\urlprefix\url{https://openreview.net/forum?id=h81ztbrkFb}

\bibitem{koh2021wilds}
P.~W. Koh, S.~Sagawa, H.~Marklund, S.~M. Xie, M.~Zhang, A.~Balsubramani, W.~Hu,
  M.~Yasunaga, R.~L. Phillips, I.~Gao, T.~Lee, E.~David, I.~Stavness, W.~Guo,
  B.~Earnshaw, I.~Haque, S.~M. Beery, J.~Leskovec, A.~Kundaje, E.~Pierson,
  S.~Levine, C.~Finn, P.~Liang,
  \href{https://proceedings.mlr.press/v139/koh21a.html}{Wilds: A benchmark of
  in-the-wild distribution shifts}, in: M.~Meila, T.~Zhang (Eds.), Proceedings
  of the 38th International Conference on Machine Learning, Vol. 139 of
  Proceedings of Machine Learning Research, PMLR, 2021, pp. 5637--5664.
\newline\urlprefix\url{https://proceedings.mlr.press/v139/koh21a.html}

\bibitem{arjovsky2019invariant}
M.~Arjovsky, L.~Bottou, I.~Gulrajani, D.~Lopez-Paz,
  \href{https://doi.org/10.48550/arXiv.1907.02893}{Invariant risk
  minimization}, arXiv preprint arXiv:1907.02893 (2019).
\newblock \href {https://doi.org/10.48550/arXiv.1907.02893}
  {\path{doi:10.48550/arXiv.1907.02893}}.
\newline\urlprefix\url{https://doi.org/10.48550/arXiv.1907.02893}

\bibitem{idrissi2022simple}
B.~Y. Idrissi, M.~Arjovsky, M.~Pezeshki, D.~Lopez-Paz,
  \href{https://proceedings.mlr.press/v177/idrissi22a.html}{Simple data
  balancing achieves competitive worst-group-accuracy}, in: B.~Schölkopf,
  C.~Uhler, K.~Zhang (Eds.), Proceedings of the First Conference on Causal
  Learning and Reasoning, Vol. 177 of Proceedings of Machine Learning Research,
  PMLR, 2022, pp. 336--351.
\newline\urlprefix\url{https://proceedings.mlr.press/v177/idrissi22a.html}

\bibitem{nam2020learning}
J.~Nam, H.~Cha, S.~Ahn, J.~Lee, J.~Shin,
  \href{https://proceedings.neurips.cc/paper_files/paper/2020/file/eddc3427c5d77843c2253f1e799fe933-Paper.pdf}{Learning
  from failure: De-biasing classifier from biased classifier}, in:
  H.~Larochelle, M.~Ranzato, R.~Hadsell, M.~Balcan, H.~Lin (Eds.), Advances in
  Neural Information Processing Systems, Vol.~33, Curran Associates, Inc.,
  2020, pp. 20673--20684.
\newline\urlprefix\url{https://proceedings.neurips.cc/paper_files/paper/2020/file/eddc3427c5d77843c2253f1e799fe933-Paper.pdf}

\bibitem{li2024bias}
G.~Li, J.~Liu, W.~Hu, \href{https://openreview.net/forum?id=75OwvzZZBT}{Bias
  amplification enhances minority group performance}, Transactions on Machine
  Learning Research (2024).
\newline\urlprefix\url{https://openreview.net/forum?id=75OwvzZZBT}

\bibitem{zheng2024selfguided}
G.~Zheng, W.~Ye, A.~Zhang,
  \href{https://doi.org/10.24963/ijcai.2024/619}{Learning robust classifiers
  with self-guided spurious correlation mitigation}, in: Proceedings of the
  Thirty-Third International Joint Conference on Artificial Intelligence, IJCAI
  '24, 2024.
\newblock \href {https://doi.org/10.24963/ijcai.2024/619}
  {\path{doi:10.24963/ijcai.2024/619}}.
\newline\urlprefix\url{https://doi.org/10.24963/ijcai.2024/619}

\bibitem{han2024disagreement}
H.~Han, S.~Kim, H.~Joo, S.~Hong, J.~Lee,
  \href{https://proceedings.neurips.cc/paper_files/paper/2024/file/879c5890a9d2ecdcb590c9674cda4a59-Paper-Conference.pdf}{Mitigating
  spurious correlations via disagreement probability}, in: A.~Globerson,
  L.~Mackey, D.~Belgrave, A.~Fan, U.~Paquet, J.~Tomczak, C.~Zhang (Eds.),
  Advances in Neural Information Processing Systems, Vol.~37, Curran
  Associates, Inc., 2024, pp. 74363--74382.
\newblock \href {https://doi.org/10.52202/079017-2366}
  {\path{doi:10.52202/079017-2366}}.
\newline\urlprefix\url{https://proceedings.neurips.cc/paper_files/paper/2024/file/879c5890a9d2ecdcb590c9674cda4a59-Paper-Conference.pdf}

\bibitem{tsirigotis2024group}
C.~Tsirigotis, J.~Monteiro, P.~Rodriguez, D.~Vazquez, A.~Courville,
  \href{https://proceedings.neurips.cc/paper_files/paper/2023/file/b0d9ceb3d11d013e55da201d2a2c07b2-Paper-Conference.pdf}{Group
  robust classification without any group information}, in: A.~Oh, T.~Naumann,
  A.~Globerson, K.~Saenko, M.~Hardt, S.~Levine (Eds.), Advances in Neural
  Information Processing Systems, Vol.~36, Curran Associates, Inc., 2023, pp.
  56553--56575.
\newblock \href {https://doi.org/10.52202/075280-2468}
  {\path{doi:10.52202/075280-2468}}.
\newline\urlprefix\url{https://proceedings.neurips.cc/paper_files/paper/2023/file/b0d9ceb3d11d013e55da201d2a2c07b2-Paper-Conference.pdf}

\bibitem{le2025out}
P.~Q. Le, J.~Schl{\"o}tterer, C.~Seifert,
  \href{https://openreview.net/forum?id=EEeVYfXor5}{Out of spuriousity:
  Improving robustness to spurious correlations without group annotations},
  Transactions on Machine Learning Research (2025).
\newline\urlprefix\url{https://openreview.net/forum?id=EEeVYfXor5}

\bibitem{cohen2025identifying}
J.~P. Cohen, L.~Blankemeier, A.~S. Chaudhari,
  \href{https://openreview.net/forum?id=Utjw2z1ale}{Identifying spurious
  correlations using counterfactual alignment}, Transactions on Machine
  Learning Research (2025).
\newline\urlprefix\url{https://openreview.net/forum?id=Utjw2z1ale}

\bibitem{chen2024neuralcollapse}
Z.~Chen, M.~Zhang, S.~Cui, H.~Li, G.~Niu, M.~Gong, C.~Zhang, K.~Zhang,
  \href{https://proceedings.neurips.cc/paper_files/paper/2024/file/aa56c74513a5e35768a11f4e82dd7ffb-Paper-Conference.pdf}{Neural
  collapse inspired feature alignment for out-of-distribution generalization},
  in: A.~Globerson, L.~Mackey, D.~Belgrave, A.~Fan, U.~Paquet, J.~Tomczak,
  C.~Zhang (Eds.), Advances in Neural Information Processing Systems, Vol.~37,
  Curran Associates, Inc., 2024, pp. 93671--93689.
\newblock \href {https://doi.org/10.52202/079017-2970}
  {\path{doi:10.52202/079017-2970}}.
\newline\urlprefix\url{https://proceedings.neurips.cc/paper_files/paper/2024/file/aa56c74513a5e35768a11f4e82dd7ffb-Paper-Conference.pdf}

\bibitem{stromberg2024robustness}
N.~Stromberg, R.~Ayyagari, M.~Welfert, S.~Koyejo, R.~Nock, L.~Sankar,
  Robustness to subpopulation shift with domain label noise via regularized
  annotation of domains, Transactions on Machine Learning Research 2024,
  publisher Copyright: {\textcopyright} 2024, Transactions on Machine Learning
  Research. All rights reserved. (2024).

\bibitem{peters2016causal}
J.~Peters, P.~Bühlmann, N.~Meinshausen,
  \href{https://doi.org/10.1111/rssb.12167}{Causal inference by using invariant
  prediction: Identification and confidence intervals}, Journal of the Royal
  Statistical Society Series B: Statistical Methodology 78~(5) (2016)
  947--1012.
\newblock \href
  {http://arxiv.org/abs/https://academic.oup.com/jrsssb/article-pdf/78/5/947/49235444/jrsssb_78_5_947.pdf}
  {\path{arXiv:https://academic.oup.com/jrsssb/article-pdf/78/5/947/49235444/jrsssb_78_5_947.pdf}},
  \href {https://doi.org/10.1111/rssb.12167} {\path{doi:10.1111/rssb.12167}}.
\newline\urlprefix\url{https://doi.org/10.1111/rssb.12167}

\bibitem{rojas2018invariant}
M.~Rojas-Carulla, B.~Sch{{\"o}}lkopf, R.~Turner, J.~Peters,
  \href{http://jmlr.org/papers/v19/16-432.html}{Invariant models for causal
  transfer learning}, Journal of Machine Learning Research 19~(36) (2018)
  1--34.
\newline\urlprefix\url{http://jmlr.org/papers/v19/16-432.html}

\bibitem{ahuja2020invariant}
K.~Ahuja, K.~Shanmugam, K.~Varshney, A.~Dhurandhar,
  \href{https://proceedings.mlr.press/v119/ahuja20a.html}{Invariant risk
  minimization games}, in: H.~D. III, A.~Singh (Eds.), Proceedings of the 37th
  International Conference on Machine Learning, Vol. 119 of Proceedings of
  Machine Learning Research, PMLR, 2020, pp. 145--155.
\newline\urlprefix\url{https://proceedings.mlr.press/v119/ahuja20a.html}

\bibitem{liu2021kernelized}
J.~Liu, Z.~Hu, P.~Cui, B.~Li, Z.~Shen,
  \href{https://proceedings.neurips.cc/paper_files/paper/2021/file/b59a51a3c0bf9c5228fde841714f523a-Paper.pdf}{Integrated
  latent heterogeneity and invariance learning in kernel space}, in:
  M.~Ranzato, A.~Beygelzimer, Y.~Dauphin, P.~Liang, J.~W. Vaughan (Eds.),
  Advances in Neural Information Processing Systems, Vol.~34, Curran
  Associates, Inc., 2021, pp. 21720--21731.
\newline\urlprefix\url{https://proceedings.neurips.cc/paper_files/paper/2021/file/b59a51a3c0bf9c5228fde841714f523a-Paper.pdf}

\bibitem{lin2022zin}
Y.~Lin, S.~Zhu, L.~Tan, P.~Cui,
  \href{https://proceedings.neurips.cc/paper_files/paper/2022/file/9b77f07301b1ef1fe810aae96c12cb7b-Paper-Conference.pdf}{Zin:
  When and how to learn invariance without environment partition?}, in:
  S.~Koyejo, S.~Mohamed, A.~Agarwal, D.~Belgrave, K.~Cho, A.~Oh (Eds.),
  Advances in Neural Information Processing Systems, Vol.~35, Curran
  Associates, Inc., 2022, pp. 24529--24542.
\newblock \href {https://doi.org/10.52202/068431-1781}
  {\path{doi:10.52202/068431-1781}}.
\newline\urlprefix\url{https://proceedings.neurips.cc/paper_files/paper/2022/file/9b77f07301b1ef1fe810aae96c12cb7b-Paper-Conference.pdf}

\bibitem{rosenfeld2021risks}
E.~Rosenfeld, P.~Ravikumar, A.~Risteski,
  \href{https://par.nsf.gov/biblio/10222689}{The risks of invariant risk
  minimization}, International Conference on Learning Representations 9.
\newline\urlprefix\url{https://par.nsf.gov/biblio/10222689}

\bibitem{kamath2021does}
P.~Kamath, A.~Tangella, D.~Sutherland, N.~Srebro,
  \href{https://proceedings.mlr.press/v130/kamath21a.html}{Does invariant risk
  minimization capture invariance?}, in: A.~Banerjee, K.~Fukumizu (Eds.),
  Proceedings of The 24th International Conference on Artificial Intelligence
  and Statistics, Vol. 130 of Proceedings of Machine Learning Research, PMLR,
  2021, pp. 4069--4077.
\newline\urlprefix\url{https://proceedings.mlr.press/v130/kamath21a.html}

\bibitem{snell2017prototypical}
J.~Snell, K.~Swersky, R.~Zemel, Prototypical networks for few-shot learning,
  in: Proceedings of the 31st International Conference on Neural Information
  Processing Systems, NIPS'17, Curran Associates Inc., Red Hook, NY, USA, 2017,
  p. 4080–4090.

\bibitem{papyan2020prevalence}
V.~Papyan, X.~Y. Han, D.~L. Donoho,
  \href{https://www.pnas.org/doi/abs/10.1073/pnas.2015509117}{Prevalence of
  neural collapse during the terminal phase of deep learning training},
  Proceedings of the National Academy of Sciences 117~(40) (2020) 24652--24663.
\newblock \href
  {http://arxiv.org/abs/https://www.pnas.org/doi/pdf/10.1073/pnas.2015509117}
  {\path{arXiv:https://www.pnas.org/doi/pdf/10.1073/pnas.2015509117}}, \href
  {https://doi.org/10.1073/pnas.2015509117}
  {\path{doi:10.1073/pnas.2015509117}}.
\newline\urlprefix\url{https://www.pnas.org/doi/abs/10.1073/pnas.2015509117}

\bibitem{to2025diverse}
M.~N.~N. To, P.~F.~R. Wilson, V.~Nguyen, M.~Harmanani, M.~Cooper, F.~Fooladgar,
  P.~Abolmaesumi, P.~Mousavi, R.~G. Krishnan, Diverse prototypical ensembles
  improve robustness to subpopulation shift, in: Proceedings of the 42nd
  International Conference on Machine Learning, ICML'25, JMLR.org, 2025.

\bibitem{izmailov2022feature}
P.~Izmailov, P.~Kirichenko, N.~Gruver, A.~G. Wilson,
  \href{https://proceedings.neurips.cc/paper_files/paper/2022/file/fb64a552feda3d981dbe43527a80a07e-Paper-Conference.pdf}{On
  feature learning in the presence of spurious correlations}, in: S.~Koyejo,
  S.~Mohamed, A.~Agarwal, D.~Belgrave, K.~Cho, A.~Oh (Eds.), Advances in Neural
  Information Processing Systems, Vol.~35, Curran Associates, Inc., 2022, pp.
  38516--38532.
\newblock \href {https://doi.org/10.52202/068431-2791}
  {\path{doi:10.52202/068431-2791}}.
\newline\urlprefix\url{https://proceedings.neurips.cc/paper_files/paper/2022/file/fb64a552feda3d981dbe43527a80a07e-Paper-Conference.pdf}

\bibitem{nguyen2024group}
T.~LaBonte, J.~C. Hill, X.~Zhang, V.~Muthukumar, A.~Kumar,
  \href{https://proceedings.neurips.cc/paper_files/paper/2024/file/dc4db2ff2c1aefce3b594f821ea82fe6-Paper-Conference.pdf}{The
  group robustness is in the details: Revisiting finetuning under spurious
  correlations}, in: A.~Globerson, L.~Mackey, D.~Belgrave, A.~Fan, U.~Paquet,
  J.~Tomczak, C.~Zhang (Eds.), Advances in Neural Information Processing
  Systems, Vol.~37, Curran Associates, Inc., 2024, pp. 121598--121629.
\newblock \href {https://doi.org/10.52202/079017-3865}
  {\path{doi:10.52202/079017-3865}}.
\newline\urlprefix\url{https://proceedings.neurips.cc/paper_files/paper/2024/file/dc4db2ff2c1aefce3b594f821ea82fe6-Paper-Conference.pdf}

\bibitem{stromberg2024for}
N.~Stromberg, R.~Ayyagari, M.~Welfert, S.~Koyejo, R.~Nock, L.~Sankar,
  \href{https://openreview.net/forum?id=l8E68fD6yp}{For robust worst-group
  accuracy, ignore group annotations}, Transactions on Machine Learning
  Research (2024).
\newline\urlprefix\url{https://openreview.net/forum?id=l8E68fD6yp}

\bibitem{rudner2024mind}
T.~G.~J. Rudner, Y.~Shi~Zhang, A.~G. Wilson, J.~Kempe,
  \href{https://proceedings.mlr.press/v238/rudner24a.html}{Mind the {GAP}:
  Improving robustness to subpopulation shifts with group-aware priors}, in:
  S.~Dasgupta, S.~Mandt, Y.~Li (Eds.), Proceedings of The 27th International
  Conference on Artificial Intelligence and Statistics, Vol. 238 of Proceedings
  of Machine Learning Research, PMLR, 2024, pp. 127--135.
\newline\urlprefix\url{https://proceedings.mlr.press/v238/rudner24a.html}

\bibitem{stromberg2024enhancing}
N.~Stromberg, R.~Ayyagari, S.~Koyejo, R.~Nock, L.~Sankar,
  \href{https://proceedings.neurips.cc/paper_files/paper/2024/file/fba9133d7fa896ab9414ddd1a6b1ecbf-Paper-Conference.pdf}{Enhancing
  robustness of last layer two-stage fair model corrections}, in: A.~Globerson,
  L.~Mackey, D.~Belgrave, A.~Fan, U.~Paquet, J.~Tomczak, C.~Zhang (Eds.),
  Advances in Neural Information Processing Systems, Vol.~37, Curran
  Associates, Inc., 2024, pp. 139380--139401.
\newblock \href {https://doi.org/10.52202/079017-4424}
  {\path{doi:10.52202/079017-4424}}.
\newline\urlprefix\url{https://proceedings.neurips.cc/paper_files/paper/2024/file/fba9133d7fa896ab9414ddd1a6b1ecbf-Paper-Conference.pdf}

\bibitem{krueger2021out}
D.~Krueger, E.~Caballero, J.-H. Jacobsen, A.~Zhang, J.~Binas, D.~Zhang, R.~L.
  Priol, A.~Courville,
  \href{https://proceedings.mlr.press/v139/krueger21a.html}{Out-of-distribution
  generalization via risk extrapolation (rex)}, in: M.~Meila, T.~Zhang (Eds.),
  Proceedings of the 38th International Conference on Machine Learning, Vol.
  139 of Proceedings of Machine Learning Research, PMLR, 2021, pp. 5815--5826.
\newline\urlprefix\url{https://proceedings.mlr.press/v139/krueger21a.html}

\bibitem{borkan2019nuanced}
D.~Borkan, L.~Dixon, J.~Sorensen, N.~Thain, L.~Vasserman,
  \href{https://doi.org/10.1145/3308560.3317593}{Nuanced metrics for measuring
  unintended bias with real data for text classification}, in: Companion
  Proceedings of The 2019 World Wide Web Conference, WWW '19, Association for
  Computing Machinery, New York, NY, USA, 2019, p. 491–500.
\newblock \href {https://doi.org/10.1145/3308560.3317593}
  {\path{doi:10.1145/3308560.3317593}}.
\newline\urlprefix\url{https://doi.org/10.1145/3308560.3317593}

\bibitem{sagawa2020investigation}
S.~Sagawa, A.~Raghunathan, P.~W. Koh, P.~Liang,
  \href{https://proceedings.mlr.press/v119/sagawa20a.html}{An investigation of
  why overparameterization exacerbates spurious correlations}, in: H.~D. III,
  A.~Singh (Eds.), Proceedings of the 37th International Conference on Machine
  Learning, Vol. 119 of Proceedings of Machine Learning Research, PMLR, 2020,
  pp. 8346--8356.
\newline\urlprefix\url{https://proceedings.mlr.press/v119/sagawa20a.html}

\bibitem{nam2022spread}
J.~Nam, J.~Kim, J.~Lee, J.~Shin,
  \href{https://openreview.net/forum?id=_F9xpOrqyX9}{Spread spurious attribute:
  Improving worst-group accuracy with spurious attribute estimation}, in:
  International Conference on Learning Representations, 2022.
\newline\urlprefix\url{https://openreview.net/forum?id=_F9xpOrqyX9}

\bibitem{zhou2022model}
X.~Zhou, Y.~Lin, R.~Pi, W.~Zhang, R.~Xu, P.~Cui, T.~Zhang,
  \href{https://proceedings.mlr.press/v162/zhou22d.html}{Model agnostic sample
  reweighting for out-of-distribution learning}, in: K.~Chaudhuri, S.~Jegelka,
  L.~Song, C.~Szepesvari, G.~Niu, S.~Sabato (Eds.), Proceedings of the 39th
  International Conference on Machine Learning, Vol. 162 of Proceedings of
  Machine Learning Research, PMLR, 2022, pp. 27203--27221.
\newline\urlprefix\url{https://proceedings.mlr.press/v162/zhou22d.html}

\bibitem{qiao2025group}
R.~Qiao, Z.~Wu, J.~Wang, P.~W. Koh, B.~K.~H. Low,
  \href{https://proceedings.iclr.cc/paper_files/paper/2025/file/5b6c1a77153c9d11861e26f92780cf2f-Paper-Conference.pdf}{Group-robust
  sample reweighting for subpopulation shifts via influence functions}, in:
  Y.~Yue, A.~Garg, N.~Peng, F.~Sha, R.~Yu (Eds.), International Conference on
  Learning Representations, Vol. 2025, 2025, pp. 36959--36980.
\newline\urlprefix\url{https://proceedings.iclr.cc/paper_files/paper/2025/file/5b6c1a77153c9d11861e26f92780cf2f-Paper-Conference.pdf}

\end{thebibliography}

\end{document}